\documentclass[10pt]{article}
\usepackage{libertine}
\usepackage[a4paper, margin=1in]{geometry}
\usepackage{amsmath, amsthm, amssymb}
\usepackage{mathtools}
\usepackage{hyperref}
\usepackage{url}
\usepackage{booktabs}
\usepackage{array}
\usepackage{xcolor}
\usepackage{graphicx}
\usepackage{enumitem}
\usepackage{caption}
\providecommand{\cref}[1]{Section~\ref{#1}}
\providecommand{\Cref}[1]{Section~\ref{#1}}

\hypersetup{
  colorlinks=true,
  linkcolor=blue!50!black,
  citecolor=blue!50!black,
  urlcolor=blue!50!black,
}

\newtheorem{theorem}{Theorem}

\newtheorem{lemma}[theorem]{Lemma}

\theoremstyle{definition}

\newtheorem{remark}[theorem]{Remark}

\newcommand{\Aff}{\mathrm{Aff}}
\newcommand{\SO}{\mathrm{SO}}
\newcommand{\SE}{\mathrm{SE}}
\newcommand{\GL}{\mathrm{GL}}
\newcommand{\ad}{\mathrm{ad}}
\newcommand{\tr}{\mathrm{tr}}
\newcommand{\Sym}{\mathrm{Sym}}
\newcommand{\algg}{\mathfrak{g}}
\newcommand{\algt}{\mathfrak{t}}
\newcommand{\algaff}{\mathfrak{aff}}
\newcommand{\algse}{\mathfrak{se}}
\newcommand{\algso}{\mathfrak{so}}
\newcommand{\alggl}{\mathfrak{gl}}
\newcommand{\R}{\mathbb{R}}

\title{The Token Is a Group Element:\\ On Lie-Algebra Attention over Matrix Lie Groups}
\author{Przemyslaw Musialski\thanks{New Jersey Institute of Technology; \texttt{przem@njit.edu}}}
\date{\today}

\begin{document}

\maketitle

\begin{abstract}
We place the attention token on the group: a token \emph{is} an element $g_i$ of a matrix Lie group $G$ --- a bare transformation, with no feature payload and no external action $\rho(g)$ carrying it. To our knowledge this is the first attention construction whose tokens are bare matrix Lie group elements. Their score is the closed-form algebra norm of the relative pose rather than a learned kernel, and it reaches the affine full-frame groups that every irrep- or surjective-exp-based method must exclude. We call it \emph{Lie-Algebra Attention}. Once tokens are group elements, the rest follows with none of the usual representation-theoretic machinery. The relative geometry of a pair is canonical, $g_i^{-1} g_j$, so the pairwise invariant $w_{ij} = \log(g_i^{-1} g_j)$ is intrinsic rather than designed. Equivariance under the diagonal $G$-action is then tautological rather than enforced, and the cocycle condition for relative poses holds automatically. The attention score is the negative squared algebra norm of the relative pose, $s_{ij} = -\|\log(g_i^{-1} g_j)\|_\lambda^2 / \tau$. This is the canonical proximity kernel on $w_{ij}$ under a block-weighted Frobenius inner product. It uses no irreducible representations, no spherical harmonics, no Clebsch--Gordan products, and no learned kernel. The construction applies to any matrix Lie group with a faithful finite-dimensional representation. We give closed-form instantiations for $\SO(2)$, $\SE(2)$, $\SO(3)$, $\SE(3)$, $\Aff(2)$, and the spatial affine extension $\Aff(3)$. The affine cases form the non-compact non-abelian regime with scale and shear that no vector-token attention method reaches: neither the irrep tradition, which has no non-trivial finite-dimensional unitary irreps, nor the surjective-exp tradition (LieConv-style lifting). Three sequence-completion experiments, on $\SE(2)$, $\SO(3)$, and $\Aff(2)$, bear this out. The closed-form score matches a learned MLP kernel on the same invariant and outperforms it on $\SE(2)$, using $50$ to $80\times$ fewer score parameters; the learned kernel does not improve on it. A vector-token baseline, by contrast, breaks invariance by five to twelve orders of magnitude. Prior group-element attention (LieTransformer) attaches a feature vector to each token and learns the kernel; the inversion here is to keep the token a bare group element and read the score off in closed form.
\end{abstract}

\section{Introduction}
\label{sec:intro}

This paper changes what an attention token \emph{is}. In equivariant models for spatial reasoning (point clouds, molecules, robotic poses, 2D layouts, protein structure), a token is almost always a feature vector $v \in V$, and the symmetry group $G$ acts on it \emph{externally} through a representation $\rho(g): V \to V$. Equivariance is then something to \emph{enforce}, through machinery built for that purpose: irreducible representations and Clebsch--Gordan tensor products (Tensor Field Networks~\cite{thomas2018tfn}, the $\SE(3)$-Transformer~\cite{fuchs2020se3}, Equiformer~\cite{liao2022equiformer}), steerable kernels~\cite{weiler2019general}, multivector sandwich products (the Geometric Algebra Transformer~\cite{brehmer2023gatr}), or auxiliary frames (AlphaFold's Invariant Point Attention~\cite{jumper2021alphafold}). In every such case the token \emph{carries} a transformation action; it is not itself a transformation. We make the token a transformation instead: it \emph{is} an element $g_i \in G$ of a matrix Lie group --- the group element is the token, carrying nothing else.

Spatial reasoning then operates on transformations directly, rather than on vector embeddings that carry transformation actions, collapsing the standard distinction between the data and the symmetry acting on it. The consequences are immediate and structural. The relative geometry of two tokens is canonical, $g_i^{-1} g_j$, and the pairwise invariant $w_{ij} = \log(g_i^{-1} g_j) \in \algg$ is intrinsic, not designed. Equivariance of the output is tautological rather than trained. Consistency of all pairwise relative poses, the cocycle condition, is automatic. None of the representation-theoretic machinery above is required.

With tokens on the group, the attention score is read off in closed form. The natural diagonal-invariant scalar of a token pair is a function of $w_{ij} = \log(g_i^{-1} g_j)$; because $w_{ij}$ is itself invariant, any fixed bilinear form on $\algg$ applied to it is invariant too, and the block-weighted squared norm $s_{ij} = -\|w_{ij}\|_\lambda^2 / \tau$ is the canonical proximity choice among them --- a closed-form kernel with no irreducible representations, no spherical harmonics, no Clebsch--Gordan products, and no learned kernel network. Equivariance and the cocycle follow as theorems with one-line proofs (\cref{sec:construction}). The simplicity is not an implementation shortcut; it is what the representation-theoretic framing of equivariance has obscured. We call the resulting construction \emph{Lie-Algebra Attention}.

We make three contributions, one per pillar of the construction:
\begin{itemize}[leftmargin=*,topsep=2pt,itemsep=0pt]
\item \textbf{The bare group-element token.} The token \emph{is} an element of a matrix Lie group --- a transformation with no feature payload and no external action $\rho(g)$ --- rather than a vector the group acts on. Equivariance, the intrinsic pairwise invariant $w_{ij} = \log(g_i^{-1} g_j)$, and cocycle consistency follow from the group structure with one-line proofs (\cref{sec:construction}); a vector-token control that reintroduces a payload breaks equivariance by five to twelve orders of magnitude (\cref{sec:experiments}).
\item \textbf{The closed-form canonical score.} The score is the negative squared algebra norm of the relative pose, $s_{ij} = -\|w_{ij}\|_\lambda^2 / \tau$ --- the canonical proximity kernel on the invariant, read off in closed form with no irreducible representations, Clebsch--Gordan products, steerable kernels, or a learned kernel network. A learned MLP kernel on the same invariant, carrying $50$ to $80\times$ more score parameters, does not improve on it (\cref{sec:experiments}).
\item \textbf{The affine full-frame regime.} Closed-form instantiations for six groups --- $\SO(2)$, $\SE(2)$, $\SO(3)$, $\SE(3)$, $\Aff(2)$, $\Aff(3)$ --- including the non-compact non-abelian affine cases with scale and shear. These lie beyond the irrep tradition (no non-trivial finite-dimensional unitary irreps) and the surjective-exp tradition (LieConv); $\Aff(2)$ is validated empirically (\cref{sec:exp:aff2}) and $\Aff(3)$ is given in closed form as the spatial affine analogue, with empirical validation deferred to follow-up work (\cref{sec:instantiations}).
\end{itemize}

Every prior tradition keeps the token a vector and the group action external, and the limitation each inherits is a symptom of that shared ontology: the irrep tradition is restricted by unitarity to compact groups; geometric algebra keeps tokens in flat ambient space, off the group manifold; capsule networks~\cite{hinton2018matrix} use unconstrained pose matrices with no Lie-algebra structure; AlphaFold IPA gets the squared-distance kernel right for the position part but handles rotation separately via frame transformation; LieTransformer~\cite{hutchinson2021lietransformer} attends over $(g, v)$ pairs and learns the kernel, and the LieConv~\cite{finzi2020lieconv} lifting it builds on requires a surjective exponential, excluding $\Aff(n)$; RoPE~\cite{su2021rope} uses $\SO(2)$ rotations only for positional encoding. Placing the token strictly on the group is what removes the restriction in each case. A detailed comparison is in \cref{sec:related}.

The paper is organized as follows. \cref{sec:construction} develops the construction for any matrix Lie group, using $\Aff(2)$ as the running example. \cref{sec:instantiations} specializes it to the five remaining groups $\SO(2)$, $\SE(2)$, $\SO(3)$, $\SE(3)$, and the spatial affine $\Aff(3)$ ($\Aff(2)$ being the running example). \cref{sec:architecture} gives the transformer architecture in concrete terms. \cref{sec:experiments} validates it on $\SE(2)$, $\SO(3)$, and $\Aff(2)$. \cref{sec:related} positions the construction against prior work. \cref{sec:discussion} discusses limitations; \cref{sec:conclusion} concludes.

\section{The Construction}
\label{sec:construction}

This section develops the construction at the level of any matrix Lie group $G$ with a faithful finite-dimensional representation, using $\Aff(2)$ as the running concrete example. The premise to keep in front of every step is the one from \cref{sec:intro}: the token \emph{is} an element of $G$, not a vector that $G$ acts on. Three properties of any matrix Lie group are all we use: (i) closed-form exponential and logarithm on a principal chart $\mathcal{U} \subset G$, (ii) the cancellation identity $(a g_i)^{-1}(a g_j) = g_i^{-1} g_j$, and (iii) a block decomposition of the Lie algebra under the natural orthogonal action of $O(n)$. From these the invariant, the score, equivariance, and the cocycle all follow as consequences. No representation-theoretic machinery enters anywhere.

\subsection{Setting: tokens as group elements}
\label{sec:setting}

A token in this construction is an element $g \in G$ of a matrix Lie group --- a transformation in its own right, not a vector in a space on which $G$ acts. The standard equivariant-ML setup has a token $v \in V$ and a representation $\rho(g): V \to V$ acting on it; we discard $V$ and $\rho$ entirely. For the running example $G = \Aff(2)$ --- the group of invertible affine maps of the plane --- a token is a \emph{framed point}: an origin together with two basis vectors. As a homogeneous matrix,
\[
\Aff(2) = \left\{ \begin{pmatrix} A & t \\ 0 & 1 \end{pmatrix} : A \in \GL(2, \R),\; t \in \R^2 \right\},
\]
where $A$ encodes rotation, scale, and shear of the frame, $t$ encodes the origin, and the group has six degrees of freedom. Group composition and inversion are closed-form:
\[
(A_1, t_1) \cdot (A_2, t_2) = (A_1 A_2,\; A_1 t_2 + t_1), \qquad (A, t)^{-1} = (A^{-1},\; -A^{-1} t).
\]

For general matrix $G$, multiplication is matrix multiplication of the representation, and inversion is matrix inversion. No further structure is required.

\subsection{Block decomposition of the algebra}
\label{sec:block-decomposition}

The Lie algebra $\algg$ of $G$ is the tangent space at the identity, and the bracket is the matrix commutator $[X, Y] = XY - YX$. For $G = \Aff(2)$,
\[
\algaff(2) = \left\{ \begin{pmatrix} X & v \\ 0 & 0 \end{pmatrix} : X \in \alggl(2, \R),\; v \in \R^2 \right\} \cong \R^6,
\]
with bracket $[(X_1, v_1), (X_2, v_2)] = ([X_1, X_2],\; X_1 v_2 - X_2 v_1)$. The non-trivial bracket reflects the non-abelian interaction between linear part and translation.

The algebra decomposes into geometrically distinct subspaces under the natural orthogonal action on the matrix representation. For $\algaff(2)$ the decomposition has four irreducible blocks:
\begin{equation}
\label{eq:aff2-blocks}
\algaff(2) = \underbrace{\R^2}_{\text{translation}} \oplus \underbrace{\algso(2)}_{\text{rotation}} \oplus \underbrace{\R}_{\text{iso.\ scale}} \oplus \underbrace{\Sym_0(2)}_{\text{aniso.\ + shear}},
\end{equation}
with dimensions $2 + 1 + 1 + 2 = 6$. Each block carries a distinct geometric meaning and is invariant under the relevant orthogonal action. For other groups the decomposition is simpler: $\algso(3)$ is one block, $\algse(n)$ splits into translation + rotation. The full list is in Table~\ref{tab:instantiations}.

\textbf{Orthonormal basis.} Each block has a natural matrix representative. The unnormalized generators of the linear-part blocks each have Frobenius norm $\sqrt{2}$; to obtain an orthonormal basis under the Frobenius inner product (\cref{sec:inner-product}), we divide each by $\sqrt{2}$. For $\Aff(2)$, the translation generators
\[
T_x = \begin{pmatrix} 0 & 0 & 1 \\ 0 & 0 & 0 \\ 0 & 0 & 0 \end{pmatrix}, \qquad
T_y = \begin{pmatrix} 0 & 0 & 0 \\ 0 & 0 & 1 \\ 0 & 0 & 0 \end{pmatrix}
\]
are already orthonormal, and the linear-part generators, normalized by $1/\sqrt{2}$, are
\[
R = \frac{1}{\sqrt{2}} \begin{pmatrix} 0 & -1 \\ 1 & \phantom{-}0 \end{pmatrix}, \quad
S = \frac{1}{\sqrt{2}} \begin{pmatrix} 1 & 0 \\ 0 & 1 \end{pmatrix}, \quad
D_1 = \frac{1}{\sqrt{2}} \begin{pmatrix} 1 & 0 \\ 0 & -1 \end{pmatrix}, \quad
D_2 = \frac{1}{\sqrt{2}} \begin{pmatrix} 0 & 1 \\ 1 & 0 \end{pmatrix}.
\]
All six basis elements satisfy $\tr(B_i^\top B_j) = \delta_{ij}$. Every element of $\algaff(2)$ is written
\[
W = t_x T_x + t_y T_y + \theta R + s S + q_1 D_1 + q_2 D_2,
\]
with coordinates $(t_x, t_y, \theta, s, q_1, q_2) \in \R^6$.

The orthonormal coordinates absorb the $\sqrt{2}$ normalization. If $\varphi$ is the physical rotation angle (so that the unnormalized rotation block is $\varphi$ times the antisymmetric matrix in $R$), then $\theta = \sqrt{2}\, \varphi$. Similarly, if $\sigma$ is the physical isotropic-scale factor, then $s = \sqrt{2}\, \sigma$. The translation coordinates $t_x, t_y$ are unchanged.

\subsection{Exponential and logarithm}
\label{sec:exp-log}

The exponential map sends algebra elements to group elements. For algebras with a block-triangular structure $(X, v) \mapsto \left( \begin{smallmatrix} X & v \\ 0 & 0 \end{smallmatrix} \right)$ --- which covers $\Aff(2)$, $\SE(2)$, $\SE(3)$ --- the exponential has the block closed form
\[
\exp(X, v) = \bigl(e^X,\; V(X)\, v\bigr), \qquad V(X) = \int_0^1 e^{rX}\, dr = \sum_{k \ge 0} \frac{X^k}{(k+1)!}.
\]
When $X$ is invertible, $V(X) = X^{-1}(e^X - I)$.

For $\Aff(2)$, $X \in \alggl(2, \R)$ is a $2 \times 2$ matrix and Cayley--Hamilton gives
\[
X^2 - (\tr X)\, X + (\det X)\, I = 0,
\]
so every higher power $X^k$ with $k \ge 2$ reduces to a linear combination of $I$ and $X$. Summing the exponential series collapses to
\[
e^X = \alpha(X)\, I + \beta(X)\, X,
\]
where $\alpha, \beta$ are scalar functions of $\tr X$ and $\det X$. The same block-triangular form gives closed-form $V(\omega)$ via the Rodrigues route for $\SE(2)$ and $\SE(3)$ (see \cref{sec:instantiations}).

The principal logarithm inverts $\exp$ on a chart $\mathcal{U} \subset G$ --- an open neighborhood of the identity on which $\log$ is single-valued. For $\Aff(2)$,
\[
\mathcal{U}_{\Aff(2)} = \bigl\{(A, t) : A \text{ has no eigenvalue on } \R_{\le 0}\bigr\},
\]
and on this chart
\[
\log(A, t) = \bigl(\log A,\; V(\log A)^{-1}\, t\bigr).
\]
The maps $\exp$ and $\log$ are smooth on their domains and are each other's inverses on $\mathcal{U}$. They provide the bridge between $G$, where tokens live, and $\algg$, where standard linear operations apply. For the compact groups $\SO(2), \SO(3)$, the chart is $\theta \in (-\pi, \pi)$ for the rotation angle. For $\SE(n)$ the chart inherits from the rotation part. The full per-group specification is in \cref{sec:instantiations}.

\subsection{Block-weighted inner product}
\label{sec:inner-product}

The algebra $\algg$ sits inside $\alggl(m, \R)$ for some $m$ via the matrix representation. The Frobenius inner product on $\alggl(m, \R)$ restricts to a positive-definite inner product on $\algg$:
\[
\langle W_1, W_2 \rangle = \tr(W_1^\top W_2).
\]
In the orthonormal basis of \cref{sec:block-decomposition}, for $\Aff(2)$ this takes the diagonal form
\[
\|W\|^2 = t_x^2 + t_y^2 + \theta^2 + s^2 + q_1^2 + q_2^2.
\]

\textbf{Weighted form.} Respecting the block structure and the isotropy within each block, the most general $O(n)$-invariant block-diagonal positive-definite inner product is
\begin{equation}
\label{eq:weighted-norm}
\|W\|_\lambda^2 = \lambda_t (t_x^2 + t_y^2) + \lambda_\theta\, \theta^2 + \lambda_s\, s^2 + \lambda_q\, (q_1^2 + q_2^2),
\end{equation}
with positive weights $\lambda = (\lambda_t, \lambda_\theta, \lambda_s, \lambda_q) \in \R_{>0}^4$ for $\Aff(2)$. The Frobenius form is $\lambda = (1, 1, 1, 1)$. For other groups the weight count drops: two weights for $\SE(n)$, one weight for $\SO(n)$. The full count per group is in Table~\ref{tab:instantiations}.

\begin{remark}[No canonical Ad-invariant metric on $\algaff(2)$]
\label{rem:no-ad-invariant}
The algebra $\algaff(2)$ is not semisimple; it contains the nonzero translation ideal $\algt = \R^2$. We show no Ad-invariant positive-definite metric exists. Let $B$ be any Ad-invariant symmetric bilinear form. Take the isotropic-scale generator $sI$ for $s \neq 0$; for $v_1, v_2 \in \algt$, Ad-invariance gives
\[
B(s v_1, v_2) + B(v_1, s v_2) = 0 \;\Longrightarrow\; 2 s\, B(v_1, v_2) = 0,
\]
hence $B|_{\algt \times \algt} = 0$. The cross-pairing $B|_{\algt \times \alggl(2)}$ vanishes by the same generator: for $v \in \algt$ and $Y \in \alggl(2)$, $[sI, v] = s\, v$ while $[sI, Y] = 0$ since $sI$ is central in $\alggl(2)$, so Ad-invariance gives $s\, B(v, Y) = 0$, hence $B(v, Y) = 0$. Therefore $\algt$ lies in the radical of every Ad-invariant symmetric bilinear form, and no such form is non-degenerate.
\end{remark}

The Frobenius form is non-degenerate and positive-definite but \emph{not} Ad-invariant on $\algaff(2)$. The construction does not require Ad-invariance of the metric: it requires that $w_{ij}$ itself be invariant (Lemma~\ref{lem:invariance}), so any fixed bilinear form on $\algg$ applied to $w_{ij}$ yields an invariant scalar. We use the Frobenius form because it is positive-definite, gives equal weight to all coordinates in the orthonormal basis of \cref{sec:block-decomposition}, and admits the block-weighted refinement \eqref{eq:weighted-norm}. For $\SE(2), \SE(3)$, an analogous no-Ad-invariant-metric result holds via a structurally different argument (using translation generators acting on rotation through the bracket); see Remarks in \cref{sec:instantiations}. For $\SO(3)$ (and compact simple algebras generally) the Killing form is non-degenerate and the metric is canonical up to a positive scalar. For $\SO(2)$ the algebra is abelian and the Killing form vanishes identically; any positive scalar gives an Ad-invariant metric on the 1-dimensional algebra.

\subsection{The invariant \texorpdfstring{$w_{ij}$}{w\_ij}}
\label{sec:invariant}

With tokens on the group, the pairwise invariant is not something we have to design. It is intrinsic to the token type. Given two tokens $g_i, g_j \in G$, their \emph{relative pose} is the group element $g_i^{-1} g_j$. For $\Aff(2)$:
\[
g_i^{-1} g_j = \bigl( A_i^{-1} A_j,\; A_i^{-1}(t_j - t_i) \bigr).
\]
When $g_i^{-1} g_j \in \mathcal{U}$, mapping to the algebra gives a $\dim(\algg)$-vector encoding how $j$ differs from $i$ in each geometric component:
\[
w_{ij} = \log\bigl(g_i^{-1} g_j\bigr) \in \algg.
\]

\begin{lemma}[Diagonal invariance]
\label{lem:invariance}
$w_{ij}$ is invariant under the diagonal action $g_i \mapsto a g_i$ for every $a \in G$.
\end{lemma}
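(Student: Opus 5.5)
The plan is to reduce the lemma to the cancellation identity (ii) from the start of \cref{sec:construction}, and then note that $\log$ is a function of its argument alone, with no dependence on $a$ or on the individual tokens.

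\textbf{Step 1 (relative pose is unchanged).} Fix $a \in G$ and set $g_i' = a g_i$, $g_j' = a g_j$. Using only associativity and the inverse of a product in the matrix group,
\[
(g_i')^{-1} g_j' = (a g_i)^{-1}(a g_j) = g_i^{-1} a^{-1} a\, g_j = g_i^{-1} g_j .
\]
So the relative pose is literally the same matrix before and after the action. For $\Aff(2)$ this can be checked against the explicit formula of \cref{sec:invariant}. With $a = (B, c)$ we have $A_i' = B A_i$ and $t_j' - t_i' = B(t_j - t_i)$. Hence $(A_i')^{-1}A_j' = A_i^{-1}A_j$ and $(A_i')^{-1}(t_j' - t_i') = A_i^{-1}(t_j - t_i)$.

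\textbf{Step 2 (apply $\log$).} Since $w_{ij}$ is defined as $\log(g_i^{-1} g_j)$, it depends only on the relative pose. Therefore $w'_{ij} = \log\bigl((g_i')^{-1} g_j'\bigr) = \log(g_i^{-1} g_j) = w_{ij}$.

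\textbf{Step 3 (domain condition).} The only point needing care is that $w_{ij}$ is defined only when $g_i^{-1} g_j \in \mathcal{U}$. Step 1 shows the relative pose is unchanged, so membership in the principal chart is also preserved. The transformed pair therefore lies in the domain exactly when the original does. Because $\log$ is a single-valued function on $\mathcal{U}$ (\cref{sec:exp-log}), both sides take the same value, and no branch choice can differ.

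I expect no real obstacle here; the lemma is essentially the cancellation identity. The one subtlety worth stating explicitly is the chart-domain preservation in Step 3. It should be noted that the argument uses only left multiplication. It therefore needs neither Ad-invariance of the metric (cf.\ Remark~\ref{rem:no-ad-invariant}) nor any property of $\exp$ beyond $\log$ being a function on $\mathcal{U}$.
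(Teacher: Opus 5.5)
Your proof is correct and follows the paper's own argument: the cancellation $(a g_i)^{-1}(a g_j) = g_i^{-1} a^{-1} a\, g_j = g_i^{-1} g_j$, followed by the observation that $\log$ depends only on the group element. Your explicit $\Aff(2)$ check and your note that membership in the chart $\mathcal{U}$ is preserved are valid additions that the paper leaves implicit.
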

\begin{proof}
$(a g_i)^{-1}(a g_j) = g_i^{-1}\, a^{-1}\, a\, g_j = g_i^{-1}\, g_j$, and $\log$ depends only on the group element.
\end{proof}

The cancellation needs only the group axioms. It holds for every matrix Lie group, and so does the lemma.

\begin{remark}[Antisymmetry]
On the chart, $\log(g^{-1}) = -\log(g)$, so $w_{ji} = -w_{ij}$ and $\|w_{ji}\|_\lambda^2 = \|w_{ij}\|_\lambda^2$. The symmetry is chart-local; it requires both $g_{ij}$ and $g_{ij}^{-1}$ to lie in $\mathcal{U}$.
\end{remark}

For $\Aff(2)$, the coordinates of $w_{ij}$ in the basis of \cref{sec:block-decomposition} are the relative translation, relative rotation, relative isotropic scale, and relative anisotropic-scale-and-shear between $g_i$ and $g_j$. Each of the four block coordinates is itself $G$-invariant.

\subsection{The score \texorpdfstring{$s_{ij} = -\|w_{ij}\|_\lambda^2 / \tau$}{s\_ij = -||w\_ij||\^2 / tau}}
\label{sec:score}

Once $w_{ij}$ is the intrinsic invariant of a token pair, the score follows as its canonical readout. The natural diagonal-invariant scalar built from $w_{ij}$ is its squared norm under the block-weighted inner product of \cref{sec:inner-product}. Because $w_{ij}$ is itself invariant (Lemma~\ref{lem:invariance}), any fixed bilinear form on $\algg$ applied to it is invariant as well. The block-weighted squared norm is therefore the canonical proximity readout, with the block-diagonal $O(n)$-isotropic shape fixed in \cref{sec:inner-product}. The word \emph{canonical} carries two senses here. The functional form, a squared algebra norm of the invariant, is canonical given $w_{ij}$. The block weights $\lambda$ are a minimal learned refinement, since no canonical Ad-invariant metric exists on the non-compact cases (Remark~\ref{rem:no-ad-invariant}). \cref{sec:experiments} finds that a learned kernel on the same invariant does not improve on this form. The attention score between tokens $i$ and $j$ is the negative of that, divided by a temperature:
\begin{equation}
\label{eq:score}
\boxed{\; s_{ij} = -\|w_{ij}\|_\lambda^2 \,/\, \tau, \qquad \tau \in \R_{>0}. \;}
\end{equation}
This is a proximity kernel: tokens with similar poses receive high scores; tokens with large geometric differences receive low scores. Each of the block weights controls how strongly one type of geometric difference contributes.

By Lemma~\ref{lem:invariance} the score is invariant under the diagonal $G$ action for every choice of $\lambda$ and $\tau$, since $w_{ij}$ is invariant and the norm is a deterministic function of $w_{ij}$.

\textbf{Multi-head structure.} With $H$ heads, each head $k = 1, \dots, H$ carries its own weight vector $\lambda_k \in \R_{>0}^K$, where $K$ is the number of blocks in the algebra (Table~\ref{tab:instantiations}). With a per-head temperature $\tau_k$, the total number of score parameters is $H(K + 1)$: for $\Aff(2)$ ($K = 4$), $5H$ parameters; for $\SE(n)$ ($K = 2$), $3H$; for $\SO(n)$ ($K = 1$), $2H$. Only the ratios $\lambda_k / \tau_k$ matter; $\tau_k$ can be absorbed into $\lambda_k$. One head may weight translation heavily, attending by spatial proximity. Another may weight rotation, attending by orientation similarity. The block decomposition of \cref{sec:block-decomposition} is what makes this separation possible.

\textbf{Domain.} The score is defined for token pairs where $g_i^{-1} g_j \in \mathcal{U}$. On or near the chart boundary, the global form of \cref{sec:equivariant-output} applies but the closed-form score does not.

\subsection{Equivariant output (global and local form)}
\label{sec:equivariant-output}

Equivariance, in this construction, is not a property to be enforced by representation machinery. It is tautological: the form of every equivariant map is fixed by the token type. The following theorem states it.

\begin{theorem}[Equivariant output, global form]
\label{thm:equivariant-output}
Every map $\{g_i\} \mapsto \{\hat{g}_i\}$ satisfying $\hat{g}_i(a \cdot) = a \cdot \hat{g}_i(\cdot)$ for all $a \in G$ has the form
\begin{equation}
\label{eq:output-global}
\hat{g}_i = g_i\, \Delta_i, \qquad \Delta_i \in G \text{ invariant under the diagonal } G \text{ action.}
\end{equation}
\end{theorem}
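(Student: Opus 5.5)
The plan is to define $\Delta_i$ directly from the map and then verify its invariance. Fix an equivariant map $F:\{g_j\}\mapsto\{\hat g_i\}$, so that $\hat g_i(a g_1,\dots,a g_N) = a\,\hat g_i(g_1,\dots,g_N)$ for every $a\in G$. Since $\hat g_i$ and $g_i$ both lie in the group $G$, we may set
\[
\Delta_i(g_1,\dots,g_N) := g_i^{-1}\,\hat g_i(g_1,\dots,g_N) \in G .
\]
By construction $\hat g_i = g_i\,\Delta_i$, so the only substantive claim is that $\Delta_i$ is invariant under the diagonal action. This is the same cancellation that drives Lemma~\ref{lem:invariance}:
\[
\Delta_i(a g_1,\dots,a g_N) = (a g_i)^{-1}\, a\,\hat g_i(g_1,\dots,g_N) = g_i^{-1} a^{-1} a\, \hat g_i = \Delta_i(g_1,\dots,g_N).
\]
Only the group axioms are used: closure gives $\Delta_i\in G$, and associativity together with inverses gives the cancellation. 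No chart, logarithm, or metric enters, which is why this is called the global form.

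For completeness I will also prove the converse, so that \eqref{eq:output-global} is an exact characterization. Suppose each $\Delta_i$ is a $G$-valued function of the configuration that is invariant under the diagonal action. Then $\hat g_i = g_i\Delta_i$ satisfies
\[
\hat g_i(a\cdot) = a g_i\,\Delta_i(a\cdot) = a g_i\,\Delta_i(\cdot) = a\,\hat g_i(\cdot),
\]
so the map is equivariant. Consequently any $\Delta_i$ built from the invariants $w_{jk}$ of Lemma~\ref{lem:invariance} gives an equivariant output. In particular, one can take $\Delta_i=\exp(\sum_j \alpha_{ij} w_{ij})$ with attention weights from \eqref{eq:score}; this is the local form the section heading anticipates.

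There is no real obstacle; the main care point is stating the quantifiers precisely. "Invariant" must mean invariant as a function of the whole token configuration, that is, $\Delta_i(a g_1,\dots,a g_N)=\Delta_i(g_1,\dots,g_N)$ for all $a$. It does not mean $\Delta_i$ is pointwise fixed by some action on $G$. The equivariance hypothesis must hold for every $a\in G$ and every configuration. I will state both conventions explicitly before the computation.

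Invariance in this global sense does not require $g_i^{-1}\hat g_i$ to lie in $\mathcal{U}$. That chart condition matters only when $\Delta_i$ is later written as an exponential.
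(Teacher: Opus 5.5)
Your proof is correct and is essentially the paper's argument: set $\Delta_i = g_i^{-1}\hat{g}_i$, note that $\hat{g}_i = g_i\Delta_i$ holds by construction, and get invariance from the cancellation $(a g_i)^{-1}(a\hat{g}_i) = g_i^{-1}\hat{g}_i$, which uses only the group axioms. Your explicit quantifier conventions and the converse direction are sound additions that make the statement an exact characterization; the paper uses the converse implicitly when it asserts that $g_i\exp(\delta_i)$ with invariant $\delta_i$ is equivariant.
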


\begin{proof}
Define $\Delta_i = g_i^{-1}\, \hat{g}_i$. Under $g_i \mapsto a g_i$ and $\hat{g}_i \mapsto a \hat{g}_i$:
\[
g_i^{-1}\, \hat{g}_i \;\mapsto\; (a g_i)^{-1}(a \hat{g}_i) = g_i^{-1}\, \hat{g}_i,
\]
so $\Delta_i$ is invariant. The reconstruction $\hat{g}_i = g_i \Delta_i$ recovers the original map.
\end{proof}

\textbf{Local form.} When $\Delta_i \in \mathcal{U}$, write
\[
\Delta_i = \exp(\delta_i),\qquad \delta_i = \log(\Delta_i) \in \algg.
\]
Since $\Delta_i$ is invariant, so is $\delta_i$. The output becomes
\begin{equation}
\label{eq:output-local}
\hat{g}_i = g_i\, \exp(\delta_i).
\end{equation}
This is the form used in practice: the model predicts $\delta_i \in \R^{\dim \algg}$ from invariant features and the output pose is $g_i \exp(\delta_i)$.

The exponential map of $G$ is not surjective in general: not every group element is the exponential of an algebra element. The local form \eqref{eq:output-local} covers tokens whose corrections lie on the chart; the global form \eqref{eq:output-global} applies to the rest.

\subsection{Cocycle preservation}
\label{sec:cocycle}

The relative poses between output tokens must be internally consistent: $\hat{g}_{ij}\, \hat{g}_{jk}$ must equal $\hat{g}_{ik}$ for all triples $i, j, k$. This is the cocycle condition. Because the tokens \emph{are} group elements, the output formula \eqref{eq:output-global} satisfies it automatically. The model cannot emit a globally inconsistent set of poses, even adversarially. The consistency lives in the token type, not in the loss.

For $\hat{g}_i = g_i \Delta_i$, the output relative pose is
\[
\hat{g}_{ij} = \hat{g}_i^{-1}\, \hat{g}_j = \Delta_i^{-1}\, g_{ij}\, \Delta_j.
\]
Composing two consecutive relative poses:
\[
\hat{g}_{ij}\, \hat{g}_{jk} = \Delta_i^{-1}\, g_{ij}\, \underbrace{\Delta_j\, \Delta_j^{-1}}_{=\,e}\, g_{jk}\, \Delta_k = \Delta_i^{-1}\, g_{ik}\, \Delta_k = \hat{g}_{ik}.
\]
The local form $\hat{g}_i = g_i \exp(\delta_i)$ gives the same conclusion: $\hat{g}_{ij}\, \hat{g}_{jk} = \exp(-\delta_i)\, g_{ik}\, \exp(\delta_k) = \hat{g}_{ik}$.

The cocycle holds for every choice of $\Delta_i$ (or $\delta_i$), with no constraint imposed on the model. Consistency is structural, not learned.

\section{Six Instantiations}
\label{sec:instantiations}

The construction of \cref{sec:construction} applies to any matrix Lie group with a faithful finite-dimensional representation. This section instantiates it explicitly for $\SO(2)$, $\SE(2)$, $\SO(3)$, $\SE(3)$, and $\Aff(3)$; the remaining group, $\Aff(2)$, is the running worked example throughout \cref{sec:construction}. The compact and Euclidean cases $\SO(2)$, $\SE(2)$, $\SO(3)$, $\SE(3)$ are all instances of one token type. Together they reproduce the regimes that prior methods handle one group at a time. The affine cases $\Aff(2)$ and $\Aff(3)$ are non-compact and non-abelian, with scale and shear. No irrep method can reach them, since they have no non-trivial finite-dimensional unitary irreps, and no surjective-exp method can either, since the LieConv-style group lifting requires a surjective exponential that the affine cases lack. Yet they fall out of the same construction with only a larger block count. They are the point of the section. The six instantiations cover all groups relevant to 2D and 3D spatial reasoning.

The proofs of invariance, equivariant output, and cocycle preservation in \cref{sec:construction} apply to all six instantiations without modification. They use only the group axioms and the cancellation $(ag_i)^{-1}(ag_j) = g_i^{-1}g_j$.

\subsection{\texorpdfstring{$\SO(2)$}{SO(2)} --- Planar rotations}
\label{sec:so2}

\textbf{Group.} $\SO(2) = \{R(\theta) : \theta \in \R\}$, dimension 1.

\textbf{Algebra.} $\algso(2) = \{\omega J : \omega \in \R\}$ where $J = \begin{pmatrix} 0 & -1 \\ 1 & 0 \end{pmatrix}$. Dimension 1.

\textbf{Exp/log.} $\exp(\omega J) = R(\omega)$; $\log(R(\theta)) = \theta J$ for $\theta \in (-\pi, \pi)$.

\textbf{Block decomposition.} One block: rotation. $\algso(2) = \underbrace{\R}_{\text{rotation}}$.

\textbf{Orthonormal basis.} $R_0 = \frac{1}{\sqrt{2}} J$. Coordinate: $\delta\theta = \sqrt{2}\,\omega$.

\textbf{Score.} One weight:
\[
s_{ij} = -\lambda_\theta\, \delta\theta_{ij}^2 / \tau.
\]

Block-weight count per head: 1 (plus temperature). The score operates on the same invariant as RoPE (the angular difference) but with quadratic, not oscillatory, dependence.

\subsection{\texorpdfstring{$\SE(2)$}{SE(2)} --- Rigid motions of the plane}
\label{sec:se2}

\textbf{Group.} $\SE(2) = \SO(2) \ltimes \R^2$, dimension 3.

\textbf{Algebra.}
\[
\algse(2) = \left\{ \begin{pmatrix} \omega J & v \\ 0 & 0 \end{pmatrix} : \omega \in \R,\; v \in \R^2 \right\} \cong \R^3.
\]
Bracket: $[(\omega_1, v_1), (\omega_2, v_2)] = (0,\; \omega_1 J v_2 - \omega_2 J v_1)$.

\textbf{Exp/log.} Closed-form via
$V(\omega) = \frac{\sin\omega}{\omega} I + \frac{1-\cos\omega}{\omega} J$
and
$V(\theta)^{-1} = \frac{\theta}{2}\cot(\theta/2)\, I - \frac{\theta}{2}\, J$, on the chart $\theta \in (-\pi, \pi)$.

\textbf{Block decomposition.} Two blocks:
\[
\algse(2) = \underbrace{\R^2}_{\text{translation}} \oplus \underbrace{\R}_{\text{rotation}}.
\]

\textbf{Orthonormal basis.} $T_x, T_y$ (translation, unit norm), $R_0 = \frac{1}{\sqrt{2}} J$ (rotation, normalized).

\textbf{Score.} Two weights:
\[
s_{ij} = -\bigl(\lambda_t (\delta t_x^2 + \delta t_y^2) + \lambda_\theta\, \delta\theta^2\bigr) / \tau.
\]

Block-weight count per head: 2 (plus temperature). Experimentally validated in \cref{sec:experiments}.

\begin{remark}[No Ad-invariant positive-definite metric on $\algse(2)$]
The translation ideal $\algt = \R^2$ is a nonzero abelian ideal of $\algse(2)$. Let $B$ be any Ad-invariant symmetric bilinear form. For $X = (0, v_X)$, $Y = (0, v_Y) \in \algt$ and $Z = (\omega, 0) \in \algso(2)$ with $\omega \neq 0$: since translations commute, $[X, Y] = 0$, so Ad-invariance gives $B(Y, [X, Z]) = 0$. The bracket evaluates to $[X, Z] = (0, -\omega J v_X) \in \algt$. Since $J$ is invertible, $\{J v_X : v_X \in \R^2\} = \R^2$. Therefore $B|_{\algt \times \algt} = 0$. A positive-definite form cannot vanish on a non-zero subspace, so no Ad-invariant positive-definite metric exists. The Frobenius form is non-degenerate and positive-definite but not Ad-invariant.
\end{remark}

\subsection{\texorpdfstring{$\SO(3)$}{SO(3)} --- Spatial rotations}
\label{sec:so3}

\textbf{Group.} $\SO(3) = \{R \in \R^{3 \times 3} : R^\top R = I,\; \det R = 1\}$, dimension 3.

\textbf{Algebra.} $\algso(3) = \{X \in \R^{3 \times 3} : X^\top = -X\}$, spanned by
\[
L_x = \begin{pmatrix} 0 & 0 & 0 \\ 0 & 0 & -1 \\ 0 & 1 & 0 \end{pmatrix},\quad
L_y = \begin{pmatrix} 0 & 0 & 1 \\ 0 & 0 & 0 \\ -1 & 0 & 0 \end{pmatrix},\quad
L_z = \begin{pmatrix} 0 & -1 & 0 \\ 1 & 0 & 0 \\ 0 & 0 & 0 \end{pmatrix}.
\]
Each has $\|L_\alpha\|_F = \sqrt{2}$. Bracket: $[L_x, L_y] = L_z$ and cyclic.

\textbf{Exp/log.} Rodrigues formula:
\[
\exp(\theta\, \hat{n}_\times) = I + \sin\theta\, \hat{n}_\times + (1 - \cos\theta)\, \hat{n}_\times^2,
\]
where $\hat{n} \in S^2$ is the axis and $\theta \in [0, \pi)$. Log on the chart $\theta \in (0, \pi)$:
\[
\theta = \arccos\!\left(\frac{\tr(R) - 1}{2}\right),\qquad
\hat{n}_\times = \frac{R - R^\top}{2 \sin\theta}.
\]

\textbf{Block decomposition.} One block: rotation. $\algso(3) = \underbrace{\R^3}_{\text{rotation}}$. The algebra is simple; it does not decompose further.

\textbf{Orthonormal basis.} $R_\alpha = \frac{1}{\sqrt{2}} L_\alpha$ for $\alpha \in \{x, y, z\}$. Coordinates: $\delta\theta_\alpha = \sqrt{2}\, \omega_\alpha$.

\textbf{Score.} One weight:
\[
s_{ij} = -\lambda_\theta (\delta\theta_x^2 + \delta\theta_y^2 + \delta\theta_z^2) / \tau.
\]

Block-weight count per head: 1 (plus temperature). This is the squared geodesic distance on $\SO(3)$, up to a positive scalar.

\begin{remark}[Canonical Ad-invariant metric on $\algso(3)$]
$\algso(3)$ is simple and compact. The Killing form $K(X, Y) = \tr(\ad_X \ad_Y)$ is negative-definite and Ad-invariant. Computed in the standard 3D representation:
$K(L_x, L_x) = \tr(\ad_{L_x}^2) = -2 = \tr(L_x^2)$, so $K(X, Y) = \tr(XY)$. For antisymmetric matrices, $\tr(X Y) = -\tr(X^\top Y)$, hence $K(X, Y) = -\tr(X^\top Y)$. The canonical Ad-invariant positive-definite metric is therefore
\[
\langle X, Y \rangle = -K(X, Y) = \tr(X^\top Y),
\]
which is the Frobenius form. The metric is determined up to a positive scalar, and the block weight $\lambda_\theta$ is a global scale only.
\end{remark}

\subsection{\texorpdfstring{$\SE(3)$}{SE(3)} --- Rigid motions of 3-space}
\label{sec:se3}

\textbf{Group.} $\SE(3) = \SO(3) \ltimes \R^3$, dimension 6:
\[
\SE(3) = \left\{ \begin{pmatrix} R & t \\ 0 & 1 \end{pmatrix} : R \in \SO(3),\; t \in \R^3 \right\}.
\]

\textbf{Algebra.}
\[
\algse(3) = \left\{ \begin{pmatrix} \Omega & v \\ 0 & 0 \end{pmatrix} : \Omega \in \algso(3),\; v \in \R^3 \right\} \cong \R^6.
\]
Bracket: $[(\Omega_1, v_1), (\Omega_2, v_2)] = ([\Omega_1, \Omega_2],\; \Omega_1 v_2 - \Omega_2 v_1)$.

\textbf{Exp/log.} Block structure identical to $\Aff(2)$ and $\SE(2)$:
\[
\exp(\Omega, v) = \bigl(e^\Omega,\; V(\Omega)\, v\bigr),
\]
with $e^\Omega$ via Rodrigues and
\[
V(\Omega) = I + \frac{1 - \cos\theta}{\theta^2}\, \Omega + \frac{\theta - \sin\theta}{\theta^3}\, \Omega^2,
\]
where $\theta$ is the rotation angle, defined coordinate-free by
\[
\theta \;=\; \sqrt{-\tfrac{1}{2}\,\tr(\Omega^2)} \;=\; \|\omega\|_2,
\]
with $\omega \in \R^3$ the axial vector satisfying $\Omega = [\omega]_\times$. Note that $\|\Omega\|_F = \theta\sqrt{2}$, so the Frobenius norm of $\Omega$ is \emph{not} the rotation angle; the factor $\sqrt 2$ corresponds to the orthonormal normalization of \cref{sec:block-decomposition}. Inverting on the chart $\theta \in (0, \pi)$:
\[
V(\Omega)^{-1} = I - \tfrac{1}{2} \Omega + \left( \frac{1}{\theta^2} - \frac{1 + \cos\theta}{2\theta \sin\theta} \right) \Omega^2.
\]

\textbf{Block decomposition.} Two blocks:
\[
\algse(3) = \underbrace{\R^3}_{\text{translation}} \oplus \underbrace{\R^3}_{\text{rotation}}.
\]

\textbf{Orthonormal basis.} Translation: $T_x, T_y, T_z$ (unit norm in the homogeneous embedding). Rotation: $R_\alpha = \frac{1}{\sqrt{2}} L_\alpha$.

\textbf{Score.} Two weights:
\[
s_{ij} = -\bigl(\lambda_t (\delta t_x^2 + \delta t_y^2 + \delta t_z^2) + \lambda_\theta (\delta\theta_x^2 + \delta\theta_y^2 + \delta\theta_z^2)\bigr) / \tau.
\]

Block-weight count per head: 2 (plus temperature). Same structure as $\SE(2)$, one dimension higher.

\begin{remark}[No Ad-invariant positive-definite metric on $\algse(3)$]
Same structure as $\SE(2)$. For $X = (0, v_X)$, $Y = (0, v_Y) \in \algt$ and $Z = (\Omega, 0) \in \algso(3)$: $[X, Y] = 0$ and $[X, Z] = (0, -\Omega v_X) \in \algt$. So $B|_{\algt \times \algt}(v_Y, \Omega v_X) = 0$. For fixed $v_X \neq 0$, $\{\Omega v_X : \Omega \in \algso(3)\} = v_X^\perp$. Every non-zero $w \in \R^3$ lies in $v_X^\perp$ for some $v_X$ (choose any $v_X \perp w$), so the union over $v_X$ is all of $\R^3$. Therefore $B|_{\algt \times \algt} = 0$, and no Ad-invariant positive-definite metric exists on $\algse(3)$.
\end{remark}

\subsection{\texorpdfstring{$\Aff(3)$}{Aff(3)} --- Spatial affine frames}
\label{sec:aff3}

The affine construction extends directly to spatial affine frames.  An $\Aff(3)$ token is
\[
\Aff(3) = \GL(3, \R) \ltimes \R^3 =
\left\{
\begin{pmatrix} A & t \\ 0 & 1 \end{pmatrix}
: A \in \GL(3, \R),\; t \in \R^3
\right\},
\]
with composition and inverse
\[
(A_1, t_1)(A_2, t_2) = (A_1 A_2,\; A_1 t_2 + t_1),
\qquad
(A, t)^{-1} = (A^{-1},\; -A^{-1} t).
\]
Its algebra is
\[
\algaff(3) = \alggl(3, \R) \ltimes \R^3,
\]
with bracket
\[
[(X_1, v_1), (X_2, v_2)] = ([X_1, X_2],\; X_1 v_2 - X_2 v_1).
\]
Under the natural orthogonal action, the algebra decomposes as
\[
\algaff(3) =
\underbrace{\R^3}_{\text{translation}}
\oplus \underbrace{\algso(3)}_{\text{rotation}}
\oplus \underbrace{\R}_{\text{iso.\ scale}}
\oplus \underbrace{\Sym_0(3)}_{\text{aniso.\ + shear}},
\]
with dimensions $3 + 3 + 1 + 5 = 12$. Thus $\Aff(3)$ has the same four geometric block types as $\Aff(2)$, but the traceless symmetric block grows from dimension $2$ to dimension $5$.

Writing
\[
w_{ij} = \log(g_i^{-1} g_j) = (u_{ij},\; \omega_{ij},\; \eta_{ij},\; q_{ij}) \in \R^3 \oplus \R^3 \oplus \R \oplus \R^5
\]
in orthonormal block coordinates, the block-weighted norm is
\[
\|w_{ij}\|_\lambda^2 = \lambda_t \|u_{ij}\|^2 + \lambda_\theta \|\omega_{ij}\|^2 + \lambda_s\, \eta_{ij}^2 + \lambda_q \|q_{ij}\|^2,
\]
and the score is $s_{ij} = -\|w_{ij}\|_\lambda^2 / \tau$. The number of score weights per head is therefore four, exactly as for $\Aff(2)$.

The exponential and logarithm use the same block-triangular form as in $\Aff(2)$:
\[
\exp(X, v) = \bigl(e^X,\; V(X)\, v\bigr),
\qquad
V(X) = \int_0^1 e^{r X}\, dr.
\]
On the principal chart
\[
\mathcal{U}_{\Aff(3)} = \{(A, t) : A \text{ has no eigenvalue on } \R_{\le 0}\},
\]
\[
\log(A, t) = \bigl(\log A,\; V(\log A)^{-1}\, t\bigr).
\]

\begin{remark}[No Ad-invariant positive-definite metric on $\algaff(3)$]
\label{rem:no-ad-invariant-aff3}
Let $D = (I_3, 0)$ be the isotropic-scale generator. Then $[D, (0, v)] = (0, v)$ for every translation $v \in \R^3$. Ad-invariance of a symmetric bilinear form $B$ gives, for $v_1, v_2 \in \algt$,
\[
B([D, v_1], v_2) + B(v_1, [D, v_2]) = 0 \;\Longrightarrow\; 2\, B(v_1, v_2) = 0,
\]
hence $B|_{\algt \times \algt} = 0$. For $X \in \alggl(3)$, $[D, X] = 0$, so $B([D, v], X) + B(v, [D, X]) = B(v, X) + 0 = 0$, giving $B(v, X) = 0$. The translation ideal lies in the radical of every Ad-invariant symmetric bilinear form; no Ad-invariant positive-definite metric exists on $\algaff(3)$. The construction uses the fixed positive-definite Frobenius form and its four-block refinement above.
\end{remark}

\subsection{Summary table}
\label{sec:instantiations:summary}

\begin{table}[ht]
\centering
\footnotesize
\renewcommand{\arraystretch}{1.2}
\setlength{\tabcolsep}{4pt}
\resizebox{\textwidth}{!}{%
\begin{tabular}{@{}l c l c l l@{}}
\toprule
Group & Dim & Blocks & Weights/head & Exp/log & Ad-inv.\ metric? \\
\midrule
$\SO(2)$ & 1 & rotation (1) & 1 & $R(\omega)$ / $\theta$ & yes (any $\lambda>0$; abelian, Killing $= 0$) \\
$\SE(2)$ & 3 & trans.\ (2) + rot.\ (1) & 2 & $V(\omega)$ closed-form & no (radical) \\
$\SO(3)$ & 3 & rotation (3) & 1 & Rodrigues & yes (Killing) \\
$\SE(3)$ & 6 & trans.\ (3) + rot.\ (3) & 2 & Rodrigues $+\ V(\Omega)$ & no (radical) \\
$\Aff(2)$ & 6 & trans.\ (2) + rot.\ (1) + scale (1) + shear (2) & 4 & Cayley--Hamilton & no (radical) \\
$\Aff(3)$ & 12 & trans.\ (3) + rot.\ (3) + scale (1) + aniso.+shear (5) & 4 & matrix exp/log $+\ V(X)$ & no (radical) \\
\bottomrule
\end{tabular}}
\caption{Six instantiations of the construction. For all groups, the score is $s_{ij} = -\|w_{ij}\|_\lambda^2 / \tau$ with $w_{ij} = \log(g_i^{-1} g_j)$.}
\label{tab:instantiations}
\end{table}

The per-group block count is not a design choice: it equals the number of $O(n)$-irreducible components of the algebra under the standard representation (Schur's lemma applied to $\algg$ as an $O(n)$-module). For $\Aff(2)$ the irreducibles are $V \oplus \mathbb{R} \oplus \mathbb{R} \oplus \mathrm{Sym}^2_0 V$ (translation $V = \R^2$, rotation $\Lambda^2 V \cong \R$, isotropic scale $\R$, anisotropic+shear $\mathrm{Sym}^2_0 V \cong \R^2$) --- four blocks, four weights. The group dictates $K$; we do not.

For compact simple algebras ($\algso(3)$), the Killing form provides a canonical metric and the block weight is a global scale only. For $\SO(2)$, the algebra is 1-dimensional and abelian; the Killing form is zero, but any positive scalar defines an Ad-invariant metric (the choice is absorbed by $\lambda_\theta$). For non-compact groups with a nonzero translation ideal ($\SE(2)$, $\SE(3)$, $\Aff(2)$), no Ad-invariant positive-definite metric exists. Here the block weights carry geometric content. They determine the relative importance of translation versus rotation, and of scale and shear for $\Aff(2)$.

\textbf{$\Aff(2)$ is the case no vector-token attention method reaches.} The irrep-and-Clebsch--Gordan tradition is restricted by unitarity to compact groups, and $\Aff(2)$ is non-compact non-abelian (no non-trivial finite-dimensional unitary irreps). LieTransformer~\cite{hutchinson2021lietransformer} attends over $(g, v)$ pairs and learns the kernel; the LieConv~\cite{finzi2020lieconv} lifting it builds on requires a surjective exponential, which $\Aff(2)$ fails. PONITA-style position-orientation reductions partially collapse the rotation block and have no place for scale or shear. Placing the token strictly on the group unlocks the full affine case, and with it the entire four-block decomposition above. That same decomposition extends verbatim to $\Aff(3)$ (\cref{sec:aff3}). The construction is dimension-uniform across $\Aff(n)$ for $n \ge 2$, via the same $O(n)$-irreducible split.

\section{Architecture}
\label{sec:architecture}

The construction of \cref{sec:construction} fits inside a standard transformer skeleton with one substitution. The backbone is a vanilla transformer with pre-layer normalization, multi-head attention, a feed-forward block, and residual connections. It operates directly on group-element tokens. The attention score is the algebra-norm score \eqref{eq:score} on the invariant $w_{ij}$, and the per-token output is the local-form correction \eqref{eq:output-local}. Each token is nothing but its group element; all geometry enters through $w_{ij}$. The backbone carries none of the representation-theoretic machinery enumerated in \cref{sec:intro} --- no irrep layers, no auxiliary frames, no equivariant message-passing primitives. The equivariance lives in the token type and the score; the network adds nothing geometric on top.

\subsection{Set-input transformer}
\label{sec:set-input}

The input is a permutation-invariant set $\{g_i\}_{i=1}^N \subset G$ of $N$ group-valued tokens. All tokens are initialized with the \emph{same} learned vector,
\[
h_i^{(0)} = h_0 \in \R^d, \qquad i = 1, \ldots, N,
\]
so that all token-distinguishing information must enter through the geometric relations $w_{ij}$. No positional encoding is used; spatial position lives in the group element $g_i$ itself and is accessed through $w_{ij}$. The distinction matters for the claim that tokens carry no payload. The \emph{token} is the bare group element $g_i$. The vector $h_i$ is a \emph{hidden state}: identical across tokens at initialization, it is filled in only from the relations $w_{ij}$ during the forward pass, never a feature attached to a particular token.

Diagonal entries of the attention map are masked, $s_{ii} = -\infty$, preventing trivial self-attention from $w_{ii} = 0$. In the experiments of \cref{sec:experiments} the architecture uses $L = 3$ layers, embedding dimension $d = 32$, and $H = 4$ attention heads, on $7$-token model inputs (one element held out of each length-$8$ sequence).

\subsection{Forward pass}
\label{sec:forward}

The pairwise invariant is computed once at the input and reused across all $L$ layers:
\[
w_{ij} = \log\bigl(g_i^{-1} g_j\bigr) \in \R^{\dim \algg}, \qquad (i, j) \in \{1, \ldots, N\}^2.
\]
Each transformer layer is a pre-LN block:
\begin{align*}
h^{(\ell + \tfrac{1}{2})} &= h^{(\ell)} + \mathrm{Attn}\bigl(\mathrm{LN}(h^{(\ell)}),\, w\bigr), \\
h^{(\ell + 1)} &= h^{(\ell + \tfrac{1}{2})} + \mathrm{FFN}\bigl(\mathrm{LN}(h^{(\ell + \tfrac{1}{2})})\bigr),
\end{align*}
where the FFN is two linear layers with hidden width $4d$ and a GELU non-linearity. The attention module $\mathrm{Attn}(\cdot, w)$ produces per-token updates as follows.

\textbf{Scores.} For each head $k = 1, \ldots, H$:
\[
s_{ij}^{(k)} = -\|w_{ij}\|_{\lambda_k}^2 \,/\, \tau_k, \qquad \alpha_{ij}^{(k)} = \mathrm{softmax}_j\bigl(s_{ij}^{(k)}\bigr),
\]
with the diagonal masked $s_{ii}^{(k)} = -\infty$ before softmax. The block-weighted norm $\|\cdot\|_{\lambda_k}^2$ is the per-head instance of \eqref{eq:weighted-norm}; the number of weights per head equals the block count of Table~\ref{tab:instantiations}.

\textbf{Values.} Each pair $(i, j)$ contributes a value vector
\[
V_{ij} = W_V\bigl[h_j;\, w_{ij}\bigr] \in \R^d,
\]
formed by concatenating the source hidden state $h_j$ with the relative pose $w_{ij}$ and projecting through a single linear map $W_V \in \R^{d \times (d + \dim \algg)}$. The result is reshaped into $H$ heads of dimension $d/H$ and combined with the attention weights $\alpha^{(k)}$ to produce the layer output. An output linear $W_O$ then projects this output back to $\R^d$.

We use $V_{ij} = W_V[h_j; w_{ij}]$ instead of the standard $V_{ij} = W_V h_j$. This makes the value pathway carry the geometric difference vector, not just the source hidden state. This preserves directional information that the score loses when it takes a squared norm: $\|w_{ij}\|_\lambda^2 = \|-w_{ij}\|_\lambda^2$, but the values $V_{ij}$ and $V_{ji}$ are distinct.

\subsection{Multi-head and block-weight parameterization}
\label{sec:multihead}

Each head $k$ owns its own block-weight vector $\lambda_k \in \R_{>0}^K$ and temperature $\tau_k > 0$, where $K$ is the algebra's block count (Table~\ref{tab:instantiations}). Positivity is enforced by a softplus reparameterization:
\[
\lambda_k = \mathrm{softplus}(\tilde{\lambda}_k) + \varepsilon, \qquad \tau_k = \mathrm{softplus}(\tilde{\tau}_k) + \varepsilon,
\]
with $\tilde{\lambda}_k, \tilde{\tau}_k \in \R$ the unconstrained learned parameters and $\varepsilon = 10^{-3}$ a small floor. The unconstrained parameters are initialized at zero, giving $\lambda_k \approx \tau_k \approx \ln 2 \approx 0.69$ at initialization --- isotropic across blocks, on the order of unity.

The total number of score parameters in $L$ layers with $H$ heads is $L \cdot H \cdot (K + 1)$. For the $\SE(2)$ experiments ($K = 2$, $H = 4$, $L = 3$), this is $36$.

\subsection{Output head}
\label{sec:output-head}

The per-token correction is produced by a two-layer MLP applied to the final hidden state $h_i^{(L)}$:
\[
\delta_i \;=\; \phi_\delta\bigl(h_i^{(L)}\bigr) \;=\; W_2\, \mathrm{GELU}\bigl(W_1 h_i^{(L)} + b_1\bigr) + b_2 \;\in\; \R^{\dim \algg},
\]
with $W_1 \in \R^{d \times d}$, $W_2 \in \R^{\dim \algg \times d}$. The output pose is the local form of Theorem~\ref{thm:equivariant-output}:
\[
\hat{g}_i = g_i\, \exp(\delta_i).
\]
Since $\delta_i$ is a function of the invariant features $h_i^{(L)}$, it is itself invariant under the diagonal $G$ action. The composition $\hat{g}_i$ is therefore equivariant. Task-specific output heads --- for example, a per-token gap-detection head in the sequence-completion experiments of \cref{sec:experiments} --- are added on top of this base in the same invariant-features-in, scalar-out pattern.

\subsection{Equivariance and parameter count}
\label{sec:arch-params}

Equivariance is structural rather than learned. The input $w_{ij}$ is invariant by Lemma~\ref{lem:invariance}. Every subsequent operation (score, softmax, value, FFN, layer norm) is a function of invariant features and produces invariant features. The output head produces invariant $\delta_i$; the final composition $\hat{g}_i = g_i \exp(\delta_i)$ is equivariant by Theorem~\ref{thm:equivariant-output}. No training signal enforces equivariance and no equivariance-error term appears in the loss.

For the $\SE(2)$ instance (\cref{sec:exp:se2}), the parameter counts are (per model, total parameters / score-only parameters); the per-group score-parameter counts for all three experiments are given in \cref{sec:experiments}:
\begin{center}
\renewcommand{\arraystretch}{1.15}
\begin{tabular}{@{}lrr@{}}
\toprule
Model & Total & Score-only \\
\midrule
G --- algebra-norm score \eqref{eq:score} & $\sim$33\,000 & 36 \\
C --- learned MLP kernel on the same $w_{ij}$ & $\sim$35\,000 & 1\,932 \\
A --- vanilla scaled-dot-product on absolute coordinates & $\sim$40\,000 & ---  \\
\bottomrule
\end{tabular}
\end{center}
The score-parameter ratio between G and C is $\sim$54$\times$. Model C uses the same invariant $w_{ij}$ as G but replaces the closed-form score \eqref{eq:score} with a per-head MLP $\psi: \R^{\dim \algg} \to \R$; everything else is identical. Model A uses the standard scaled-dot-product score on absolute coordinate features (e.g., $(\cos \theta_i, \sin \theta_i, t_{x,i}, t_{y,i})$ for $\SE(2)$) and serves as the equivariance-breaking control. The architecture is otherwise identical across G, C, A: same depth, width, head count, FFN, and output head.

\section{Experiments}
\label{sec:experiments}

We validate the construction on three groups spanning every block-count level of \cref{tab:instantiations}: $\SE(2)$ (two blocks), $\SO(3)$ (one block), and $\Aff(2)$ (four blocks). The task in each case is \emph{sequence completion}: a constant-step sequence of $N{=}8$ group elements is generated, one interior token is removed and the remaining seven are permuted; the model must reconstruct the missing pose from the unordered set. Three models are compared in each experiment:

\begin{itemize}[leftmargin=*,topsep=2pt,itemsep=0pt]
\item \textbf{G} --- algebra-norm score $s_{ij} = -\|w_{ij}\|_\lambda^2 / \tau$ from \eqref{eq:score}.
\item \textbf{C} --- same invariant $w_{ij}$, score replaced by a per-head MLP $\psi: \R^{\dim \algg} \to \R$ (one hidden layer of $32$ units, ReLU). This is the LieTransformer-style move of \emph{learning} the kernel on the relative-pose invariant; against G it isolates fixed-versus-learned scoring, with the invariant and the rest of the architecture held constant.
\item \textbf{A} --- the \emph{vector-token ontology held fixed}: tokens are flat absolute features in $\R^d$ (specified per experiment) and the score is vanilla scaled-dot-product attention. This is the standard equivariant-ML schema (token = vector, no intrinsic invariant) operating on the same data, and serves as the direct ontology-comparison control. It isolates the group-token-versus-vector-token ontology; the finer distinction between a bare group element and a $(g,v)$ pair (the contrast with LieTransformer) is argued in \cref{sec:related}, not ablated here.
\end{itemize}

All three architectures are otherwise identical: the transformer of \cref{sec:architecture} with $L{=}3$ layers, $H{=}4$ heads, $d{=}32$, the gap-detection head, the local-form correction head, and the same value pathway $V_{ij} = W_V[h_j; w_{ij}]$ on G and C. Training: Adam at $10^{-3}$, batch $64$, gradient clip $2.0$, $200$ epochs, $5{,}000$ training instances, $500$ each for validation and test. Each configuration is run with three random seeds; we report mean $\pm$ std across seeds.

\textbf{Metrics (common).} Pose error $E_g = \|\log(\hat{g}^{-1} g_j)\|^2$ on the test set in physical coordinates (Frobenius fallback near the chart boundary; in all three experiments the fallback was triggered by no test instance). Flanking accuracy: indicator that the predicted base-token index $\hat{i} = \arg\max_i p_i$ is one of the two true neighbors $\{j-1, j+1\}$. Equivariance error: apply $10$ random global $a \in G$ per instance and measure $\|\log((a \hat{g})^{-1} \hat{g}(a \cdot \mathcal{S}))\|^2$, averaged.

\subsection{\texorpdfstring{$\SE(2)$}{SE(2)} sequence completion}
\label{sec:exp:se2}

\textbf{Task.} Generate $g_k = g_0 \cdot h^k$ on $\SE(2)$ with $g_0$ sampled across the plane and $h = \exp(\omega_h, v_h)$ small enough that $|\omega_h| < \pi/8$ keeps every relative pose on the principal chart.

\textbf{Score parameters.} G uses $2$ block weights $+$ temperature per head $= 3$ score parameters per head, for $L \cdot H \cdot 3 = 36$ total. C's per-head MLP $\R^3 \to \R$ has $\sim$$161$ score parameters per head, $1{,}932$ total. \textbf{C/G ratio: $54\times$.}

\textbf{Absolute features for A.} $v_i = (\cos\theta_i, \sin\theta_i, t_{x,i}, t_{y,i}) \in \R^4$.

\textbf{Results} (Table~\ref{tab:exp:se2}, Figure~\ref{fig:exp:se2}).
\begin{table}[ht]
\centering
\small
\renewcommand{\arraystretch}{1.15}
\setlength{\tabcolsep}{6pt}
\begin{tabular}{@{}lccc@{}}
\toprule
Model & Pose error $\downarrow$ & Flanking acc.\ $\uparrow$ & Equivariance error $\downarrow$ \\
\midrule
G & $0.003 \pm 0.001$ & $1.000 \pm 0.000$ & $4.0 \times 10^{-10}$ \\
C & $0.005 \pm 0.001$ & $1.000 \pm 0.000$ & $1.3 \times 10^{-12}$ \\
A & $0.069 \pm 0.002$ & $0.999 \pm 0.001$ & $1.3 \times 10^{-1}$ \\
\bottomrule
\end{tabular}
\caption{Sequence completion on $\SE(2)$. Mean $\pm$ std over 3 seeds on 500 test instances.}
\label{tab:exp:se2}
\end{table}

\begin{figure}[ht]
\centering
\includegraphics[width=\textwidth]{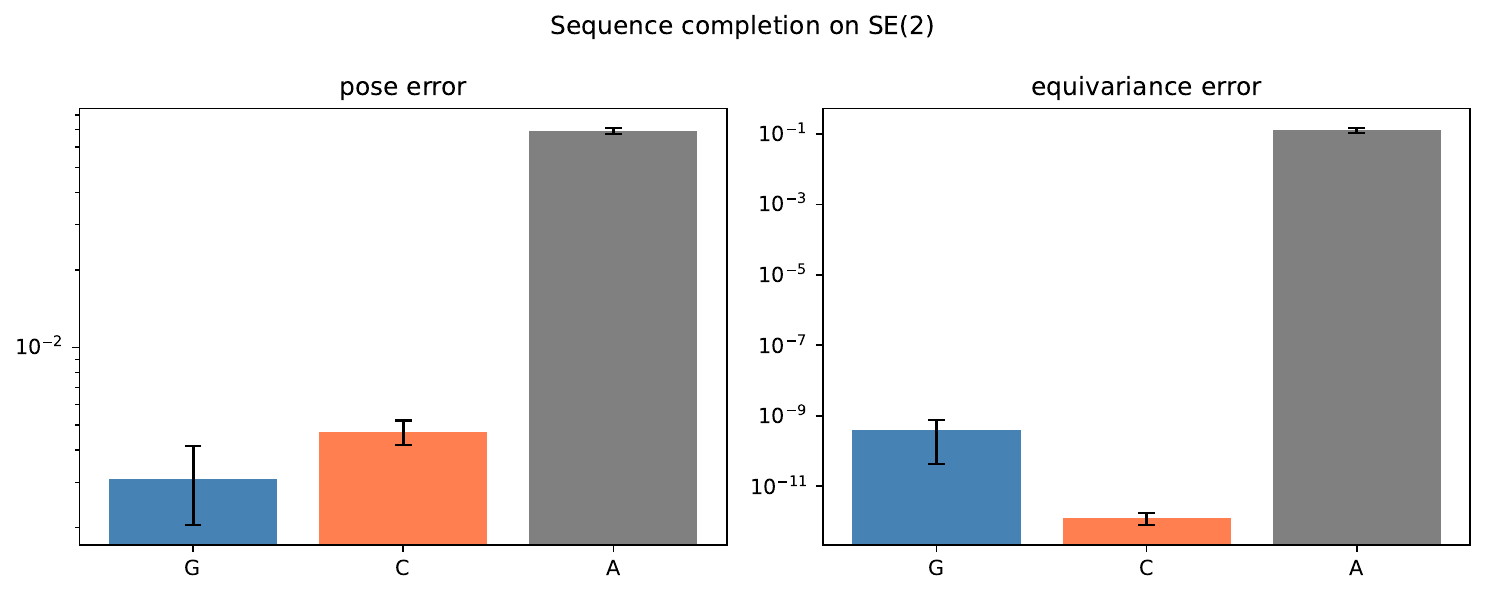}
\caption{$\SE(2)$ sequence completion: pose error and equivariance error per model (log scale). G is $34\%$ \emph{better} than C at $54\times$ fewer score parameters; both vastly outperform the equivariance-blind A, whose equivariance error is eleven orders of magnitude larger.}
\label{fig:exp:se2}
\end{figure}

G uses $36$ score parameters and reaches pose error $0.003$; C uses $1{,}932$ ($54\times$ more) and reaches $0.005$. The closed-form algebra-norm score is $34\%$ \emph{better} than the learned MLP on the same invariant, with $54\times$ fewer parameters. G's and C's equivariance errors are at the float32 numerical floor; A's is $\sim 10^{-1}$, confirming that the task tests invariance and that G/C are equivariant by construction.

\subsection{\texorpdfstring{$\SO(3)$}{SO(3)} sequence completion}
\label{sec:exp:so3}

\textbf{Task.} Generate $g_k = g_0 \cdot h^k$ on $\SO(3)$ with $g_0$ sampled Haar-uniformly and $h = \exp([\omega_h]_\times)$ with $\|\omega_h\| \le \pi/8$, so that powers $h^k$ for $k \le 7$ stay on the principal chart. Same sequence-completion task: remove one interior token, reconstruct.

\textbf{Score parameters.} $\algso(3)$ has one block (the algebra is simple), so G uses one block weight $\lambda_\theta$ plus temperature per head $= 2$ score parameters per head, $L \cdot H \cdot 2 = 24$ total. C's per-head MLP $\R^3 \to \R$ has $\sim$$161$ score parameters per head, $1{,}932$ total. \textbf{C/G ratio: $80\times$.}

\textbf{Absolute features for A.} $v_i = \mathrm{vec}(R_i) \in \R^9$ (flattened rotation matrix).

\textbf{Results} (Table~\ref{tab:exp:so3}, Figure~\ref{fig:exp:so3}).
\begin{table}[ht]
\centering
\small
\renewcommand{\arraystretch}{1.15}
\setlength{\tabcolsep}{6pt}
\begin{tabular}{@{}lccc@{}}
\toprule
Model & Pose error $\downarrow$ & Flanking acc.\ $\uparrow$ & Equivariance error $\downarrow$ \\
\midrule
G & $(1.8 \pm 0.1) \times 10^{-4}$ & $0.998 \pm 0.000$ & $1.6 \times 10^{-14}$ \\
C & $(1.3 \pm 0.5) \times 10^{-4}$ & $0.998 \pm 0.001$ & $2.7 \times 10^{-14}$ \\
A & $0.067 \pm 0.002$ & $0.965 \pm 0.006$ & $7.4 \times 10^{-2}$ \\
\bottomrule
\end{tabular}
\caption{Sequence completion on $\SO(3)$. Mean $\pm$ std over 3 seeds on 500 test instances.}
\label{tab:exp:so3}
\end{table}

\begin{figure}[ht]
\centering
\includegraphics[width=\textwidth]{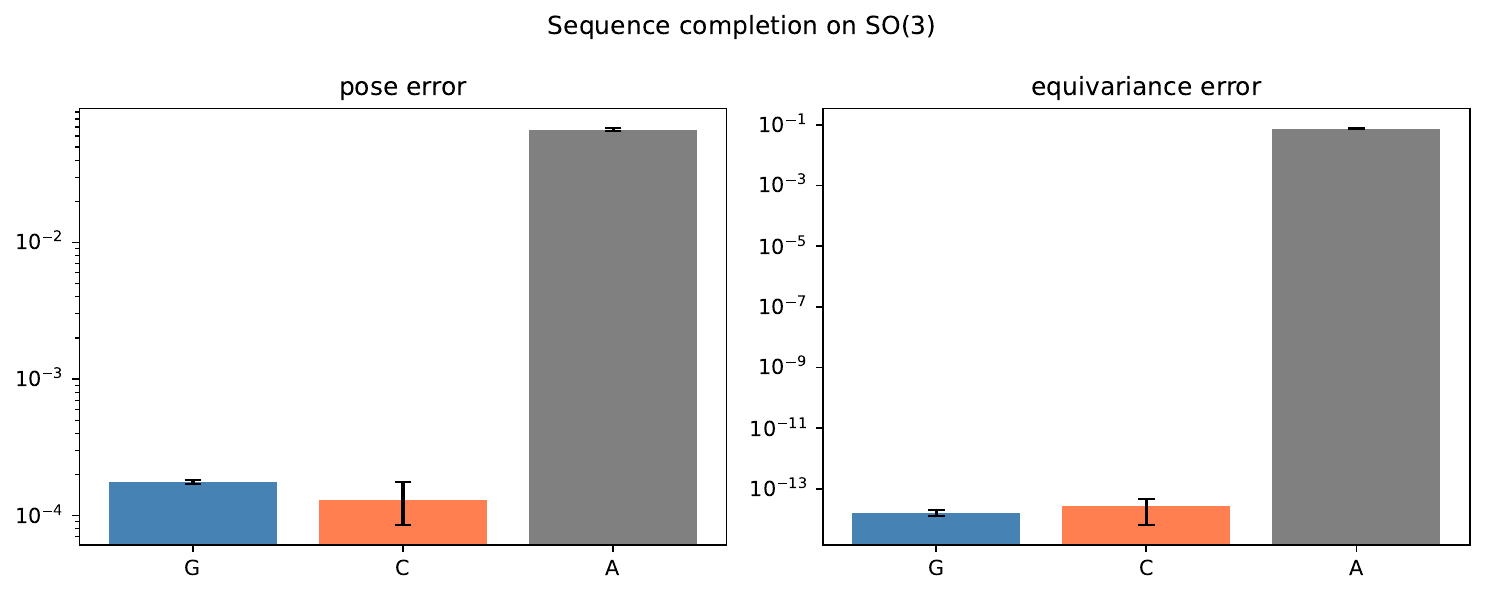}
\caption{$\SO(3)$ sequence completion: pose error and equivariance error per model (log scale). G with 24 score parameters matches C at $80\times$ fewer parameters (both at the $10^{-4}$ floor); G's and C's equivariance errors sit at $\sim 10^{-14}$ in float32, while A's is $\sim 10^{-1}$.}
\label{fig:exp:so3}
\end{figure}

Both G and C reach pose error $\sim$$10^{-4}$ (statistically indistinguishable within seed noise); C's third seed lands lower and accounts for the wider C-std, but the mean difference is well inside the spread of either model. Equivariance errors for G and C sit at $\sim 10^{-14}$, effectively float64 precision in float32 arithmetic. These are the cleanest of the three groups, because the $\SO(3)$ Rodrigues path is a single closed-form branch with no eigenvalue case-splits. A fails by twelve orders of magnitude on equivariance and by $380\times$ on pose; its flanking accuracy also drops to $0.965$, the only configuration where the gap head is not effectively saturated.

\subsection{\texorpdfstring{$\Aff(2)$}{Aff(2)} sequence completion}
\label{sec:exp:aff2}

\textbf{Task.} Generate $g_k = g_0 \cdot h^k$ on $\Aff(2)$ with $g_0 = (A_0, t_0)$ sampled generously across the group (rotation, anisotropic scale and shear of bounded magnitude, translation $\sim \mathcal{N}(0, 9 I_2)$). The step $h = \exp(X_h, v_h)$ has its linear-part magnitude bounded so that $A_h^k$ for $k \le 7$ has no eigenvalue on $\R_{\le 0}$ (chart safety on the principal $\log$ chart of $\GL(2,\R)$).

\textbf{Score parameters.} $\algaff(2)$ has four blocks (\eqref{eq:aff2-blocks}), so G uses $4$ block weights $+$ temperature per head $= 5$ score parameters per head, $L \cdot H \cdot 5 = 60$ total. C's per-head MLP $\R^6 \to \R$ has $\sim$$257$ score parameters per head, $3{,}084$ total. \textbf{C/G ratio: $51\times$.}

\textbf{Absolute features for A.} $v_i = (A_{i,11}, A_{i,12}, A_{i,21}, A_{i,22}, t_{x,i}, t_{y,i}) \in \R^6$.

\textbf{Results} (Table~\ref{tab:exp:aff2}, Figure~\ref{fig:exp:aff2}).
\begin{table}[ht]
\centering
\small
\renewcommand{\arraystretch}{1.15}
\setlength{\tabcolsep}{6pt}
\begin{tabular}{@{}lccc@{}}
\toprule
Model & Pose error $\downarrow$ & Flanking acc.\ $\uparrow$ & Equivariance error $\downarrow$ \\
\midrule
G & $0.007 \pm 0.002$ & $1.000 \pm 0.000$ & $2.3 \times 10^{-5}$ \\
C & $0.007 \pm 0.001$ & $1.000 \pm 0.000$ & $1.4 \times 10^{-9}$ \\
A & $0.79 \pm 0.10$  & $0.953 \pm 0.010$ & $1.29$ \\
\bottomrule
\end{tabular}
\caption{Sequence completion on $\Aff(2)$. Mean $\pm$ std over 3 seeds on 500 test instances.}
\label{tab:exp:aff2}
\end{table}

\begin{figure}[ht]
\centering
\includegraphics[width=\textwidth]{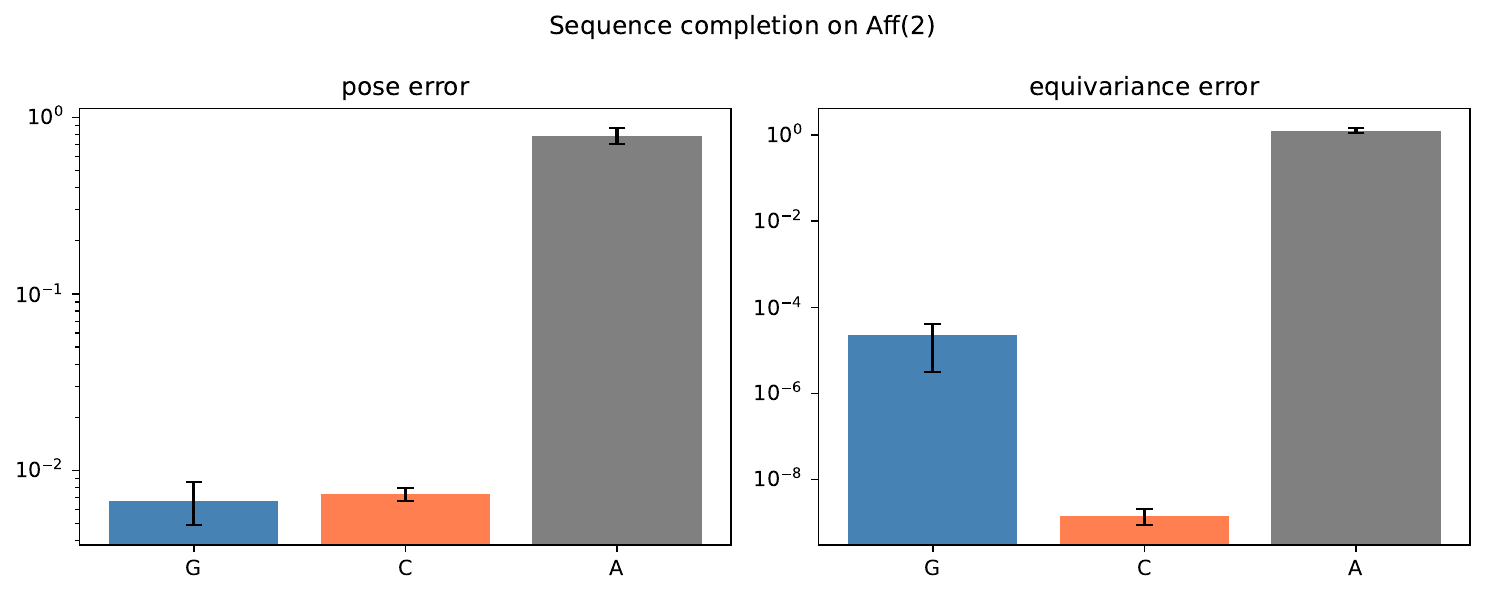}
\caption{$\Aff(2)$ sequence completion: pose error and equivariance error per model (log scale). G with 60 score parameters matches C at $51\times$ fewer parameters; A's pose error is $117\times$ worse than G's, and its equivariance error is $1.29$ --- five orders of magnitude above G's and nine above C's.}
\label{fig:exp:aff2}
\end{figure}

G and C reach pose error $\approx 0.007$, indistinguishable within seed noise, despite C using $51\times$ more score parameters. A collapses. Its pose error is $0.79$ ($117\times$ G's). Its equivariance error is $1.29$, well above the inputs themselves, so the model produces an essentially independent output under global transforms. The gap-detection head also degrades, to $0.95$ flanking accuracy. This is the only group where lack of structural invariance also breaks token-identification performance. G's equivariance error of $2.3\times 10^{-5}$ is several orders of magnitude above the $\SO(3)$ and $\SE(2)$ values. This is a float32 accumulation artifact of the branched Cayley--Hamilton path for the $\GL(2)$ matrix exp and log, with its real-distinct, real-repeated, complex-conjugate, and near-zero eigenvalue cases. It is not a structural failure. The construction is exactly equivariant in arithmetic for $\Aff(2)$ as for the other groups.

\subsection{Interpretation}
\label{sec:exp:interpretation}

The three experiments deliver three claims about the construction, backed by every block-count level of the instantiation table.

\textbf{The closed-form score is canonical, not merely compact.} Model C gives a learned MLP kernel one to two orders of magnitude more score parameters on the \emph{same} invariant $w_{ij}$, $50$ to $80\times$ G's count ($24$ vs $1{,}932$ on $\SO(3)$, $36$ vs $1{,}932$ on $\SE(2)$, $60$ vs $3{,}084$ on $\Aff(2)$). It does not improve on it. G \emph{outperforms} C on $\SE(2)$ by $\sim$$34\%$ and \emph{matches} it within seed noise on $\SO(3)$ and $\Aff(2)$, with the $\SO(3)$ point estimate marginally favouring C. The MLP has every bit of capacity needed to represent G's kernel and more, yet cannot reliably turn the surplus into a better attention pattern. On these tasks a far higher-capacity learned kernel does not beat the closed-form norm --- consistent with the block-weighted algebra norm being the canonical readout of the invariant rather than a compression of some better learned score. The $50$ to $80\times$ parameter saving is a corollary of that, not the headline. These tasks are deliberately simple --- flanking accuracy saturates at $1.000$ on $\SE(2)$ and $\Aff(2)$ --- so the parity bounds what a learned kernel buys in this regime rather than settling it in general; a harder, non-saturated task is left to follow-up.

\textbf{Structural equivariance.} G's and C's equivariance errors stay at the float32 numerical floor across all three groups: $\sim 10^{-14}$ on $\SO(3)$, $\sim 10^{-10}$--$10^{-12}$ on $\SE(2)$, $\sim 10^{-5}$--$10^{-9}$ on $\Aff(2)$. The differences across groups track the length of the matrix exp/log path: single-branch Rodrigues for $\SO(3)$, two-block triangular for $\SE(2)$, four-branch Cayley--Hamilton for $\Aff(2)$. They do not reflect any structural property. The construction is exactly equivariant in arithmetic for every matrix Lie group it covers. A breaks equivariance by five (on $\Aff(2)$, relative to G's higher floor) to twelve (on $\SO(3)$, relative to G's $10^{-14}$) orders of magnitude --- confirming that the tasks test invariance and that G's equivariance is structural, not trained.

\textbf{Reaching the affine regime.} The three experiments cover every block-count level of the construction's instantiation: $K = 1$ ($\SO(3)$), $K = 2$ ($\SE(2)$), $K = 4$ ($\Aff(2)$); the closed-form algebra-norm score works at every level with no architectural change, only the number of block weights ($K+1$ per head including temperature). The $\Aff(2)$ experiment is the load-bearing one: its four-block decomposition --- translation, rotation, isotropic scale, anisotropic-and-shear --- realizes in a transformer the non-compact non-abelian affine regime that no irrep or surjective-exp attention method reaches, the first such demonstration we are aware of. On the constant-step data the per-instance invariants are collinear, so the four block weights enter only through a single scalar and are not separately identified. The experiment shows the construction runs and is equivariant in this regime, not that the block weighting itself is exercised. A task separating the blocks is left to the same follow-up.

\section{Related Work}
\label{sec:related}

Every prior tradition keeps the token a vector and lets the group act on it externally; the structural limitation each inherits is a symptom of that shared ontology. We compare to nine traditions below, organized by token type, group operations, and the limitation each ontology imposes. The construction here uses none of these enforcement mechanisms, because placing the token on $G$ makes equivariance tautological rather than something to enforce.

\begin{table}[ht]
\centering
\footnotesize
\renewcommand{\arraystretch}{1.2}
\setlength{\tabcolsep}{4pt}
\resizebox{\textwidth}{!}{%
\begin{tabular}{@{}l l l l@{}}
\toprule
Tradition & Token domain & Group ops & Structural limitation \\
\midrule
Irrep / harmonic & vector in irrep space & representation & no non-compact irreps \\
EGNN & $(x_i,h_i) \in \R^n \times \R^d$ & distances + coordinate updates & point tokens; no frames/poses \\
Geometric algebra & multivector in flat $\R^d$ & sandwich product & token off the manifold \\
Capsules & unconstrained $4 \times 4$ matrix & heuristic & no Lie structure \\
Frame-augmented (IPA) & vector + frame $T_i \in \SE(3)$ & hybrid & position-only kernel \\
LieTransformer & $(g, v) \in G \times \R^d$ & learned kernel on pairs & surjective-exp lift (LieConv) \\
PONITA (position-orient.) & $(x, n) \in \R^n \times S^{n-1}$ & homogeneous-space & rotation block reduced; $\SE(n)$ only \\
Mironenco--Forr\'e (Lie decomp.) & function on $\Aff^+(n)$ & Cartan/polar, Haar-MC & CNN; not transformer \\
RoPE / RiemannFormer & vector in $\R^d$ & operational & not manifold-valued \\
\midrule
\textbf{This work} & $\mathbf{g_i \in G}$ on the group & closed-form algebra norm & --- \\
\bottomrule
\end{tabular}}
\caption{Attention methods by token type: every prior token is a vector carrying an external group action; this work alone places the token on the group. Listed by token domain, group operations, and structural limitation; sources cited in subsections.}
\label{tab:related}
\end{table}

\subsection{Irrep / harmonic analysis}
TFN~\cite{thomas2018tfn}, $\SE(3)$-Transformer~\cite{fuchs2020se3}, Equiformer~\cite{liao2022equiformer}, MACE~\cite{batatia2022mace}, group-equivariant CNNs~\cite{cohen2016group}, and steerable CNNs~\cite{weiler2019general} act on tokens that live in vector spaces carrying group representations. Equivariance is achieved via Clebsch--Gordan tensor products of irreps, restricted by unitarity to compact groups. The translation part of $\SE(n)$ is handled via relative positions (an abelian normal subgroup factoring trick); the rotation part uses irreps of $\SO(n)$. Non-trivial unitary irreps of non-compact non-abelian groups are infinite-dimensional, so the standard recipe does not extend to $\Aff(n)$. The tokens are vectors with $\rho(g)$ acting on them, not points on the group manifold. The line continues to the current state of the art --- EquiformerV3~\cite{equiformerv3_2026} is a 2026 irrep $\SE(3)$ graph-attention transformer.

\subsection{\texorpdfstring{$E(n)$}{E(n)}-equivariant graph networks}
EGNN~\cite{satorras2021egnn} is a central point-token baseline for molecules and point clouds. Its token is $(x_i,h_i)$: a point $x_i \in \R^n$ plus invariant features $h_i \in \R^d$.\footnote{EGNN's $h_i$ is a feature \emph{attached} to the token. The hidden state $h_i$ of \cref{sec:architecture} reuses the symbol but is not a token payload: it is identical across tokens at initialization and acquires content only from the relations $w_{ij}$ (\cref{sec:set-input}).} Messages depend on invariant squared distances $\|x_i-x_j\|^2$, and coordinate updates are built from relative displacement vectors $x_i-x_j$, giving $E(n)$ equivariance without irreps or spherical harmonics. This is close in spirit to using native geometric quantities, but the token is still a point with attached features, not a frame or transformation. There is no group-valued token $g_i$, no relative pose $g_i^{-1}g_j$, and no unified rotation/translation/shear logarithm. EGNN is therefore the natural Euclidean point baseline, whereas the construction here is the corresponding frame/pose-token construction.

\subsection{Geometric algebra}
GATr~\cite{brehmer2023gatr}, DriveGATr~\cite{drivegatr2026}, and the algebra-choice paper~\cite{dehaan2024algebra} represent tokens as multivectors in a fixed-dimension Clifford algebra. The group acts via the sandwich product. Tokens live in flat ambient space, \emph{not on the group}: motors representing $\SE(3)$ are a 6-dimensional submanifold of the 8-dimensional even subalgebra of $\mathrm{Cl}(3,0,1)$, and tokens are unconstrained 16-dimensional multivectors. The geometry is in the algebra structure, not the token space.

\subsection{Capsule networks}
Matrix Capsules with EM routing~\cite{hinton2018matrix} represent each entity by a $4 \times 4$ pose matrix updated by voting and EM routing. The pose matrices are unconstrained reals, not Lie group elements; the voting protocol provides soft training pressure toward affine-like composition, but no architectural constraint enforces a group structure. No Lie-algebra machinery, no exp/log, no equivariance theorem. Spatial Transformer Networks~\cite{jaderberg2015spatial} use $\Aff(2)$ as a learnable warping parameter, not a token representation.

\subsection{Frame-augmented (IPA)}
AlphaFold's Invariant Point Attention~\cite{jumper2021alphafold} attaches a frame $T_i \in \SE(3)$ to each vector token $s_i$. The attention score uses a squared-distance kernel between points expressed in pairs of local frames, which is $\SE(3)$-invariant. This is the closest published precedent for the construction: a squared-distance kernel between geometric quantities. The limitation is that only the position part is scored this way. Rotation is handled separately via frame transformation, and the frame is auxiliary. The primary token remains the vector $s_i$. The construction unifies rotation and translation in a single quantity, $w_{ij} = \log(g_i^{-1} g_j)$, and places the token on the group rather than in $\R^d$. A recent instance of the same pattern, ActionFlow~\cite{actionflow_2024}, carries $\SE(3)$ poses as metadata on feature tokens for robot policies, with a point-distance score.

\subsection{LieTransformer}
LieTransformer~\cite{hutchinson2021lietransformer} performs self-attention over pairs of elements in $G \times \R^d$ for an arbitrary Lie group, building on the LieConv~\cite{finzi2020lieconv} group lifting. Attention is computed on group-element pairs --- the closest existing recipe to native group-attention. Two differences set it apart from the construction here. The features are $(g, v)$ pairs, not bare group elements. And the kernel is a learned MLP on the relative-pose invariant, not the closed-form algebra norm; this is the fixed-versus-learned axis that Model C isolates in \cref{sec:experiments}. The affine groups lie outside its reach for a separate, inherited reason: the LieConv lifting requires a surjective exponential, which $\Aff(n)$ lacks. The construction places the token strictly on $G$ and applies on the principal-log chart of any matrix Lie group. The relative-pose log-map invariant itself appears earlier still, as the convolution-kernel coordinate of LieConv~\cite{finzi2020lieconv}. The cancellation-based equivariance argument is likewise anticipated by LieTransformer (the closed-form $\exp/\log$ on $\SO(n)/\SE(n)$ is classical); what is new here is the bare group-element token, the closed-form algebra-norm score, and the affine reach beyond surjective-exp groups.

\subsection{Position-orientation networks (PONITA)}
PONITA~\cite{bekkers2024ponita} is the closest published work to the $\SE(2)$ and $\SE(3)$ cases of the construction. It operates on the homogeneous space $\SE(n) / \SO(n - 1) \cong \R^n \times S^{n - 1}$ --- positions equipped with a single orientation vector --- and uses message passing with attention weight-shared over relative position-orientation pairs, derived from homogeneous-space theory. Provably $\SE(n)$-equivariant by construction. Two differences. First, the token is a position-with-single-orientation pair, not a full frame on the group; the rotation block of $\algse(n)$ is partially reduced. Second, the invariants used for weight-sharing are geometric quantities chosen per task (relative distance, relative orientation angles, alignments) rather than read off a single algebra log; the unified algebra-log form $w_{ij} = \log(g_i^{-1} g_j) \in \algse(n)$ used here recovers these as special cases, see \cref{sec:so2} (RoPE-class) and the IPA discussion above. The construction also extends naturally to $\Aff(2)$ where the position-orientation reduction is unavailable.

\subsection{Lie-group decompositions for non-compact equivariance}
Mironenco and Forr\'e~\cite{mironenco2024lie} target non-compact non-abelian groups including $G^+ = \mathrm{GL}^+(n,\R)$ and the orientation-preserving affine groups $\R^n \rtimes G^+$, which contain $\Aff^+(2)$. The exponential map for $\Aff(n)$ is not surjective, so the standard ``lift to algebra, exponentiate'' recipe fails. The paper's contribution is to use the Cartan/polar decomposition to factor $G^+$ globally as $\mathrm{Pos}(n) \times \SO(n)$, a symmetric-positive-definite part times a compact rotation part, on which Haar measure factors. The architecture is a group-equivariant CNN over images, not a transformer over framed primitives. The construction in this paper takes the same group as a target but uses a different mechanism --- attention over invariants on the principal-log chart --- and produces a different kind of model. Mironenco--Forr\'e give a rigorous measure-theoretic treatment of the $\Aff(n)$ symmetry via the Cartan/polar decomposition; the construction here is the first to embed $\Aff(2)$ tokens in a transformer architecture. Reductive Lie Neurons~\cite{kim2025relns} reach $\GL(n)$ with Lie-algebra-valued features in a general (non-attention) network, extending the non-compact reach beyond Mironenco--Forr\'e's convolution; neither is an attention construction on group-element tokens, so the affine case scored by a closed-form invariant attention norm remains open.

\subsection{RoPE and geometric reframings}
RoPE~\cite{su2021rope} uses $\SO(2)$ rotations applied to pairs of feature dimensions for positional encoding. The construction at \cref{sec:so2} subsumes the kernel-form question by giving the score $-\lambda \delta\theta^2 / \tau$. This depends on the same invariant, but with quadratic rather than oscillatory dependence. LieRE~\cite{ostmeier2024liere} generalizes RoPE to higher-dimensional Lie groups by learning a basis of skew-symmetric matrices $A_i$ and applying the rotation $R(p) = \exp(\sum_i p_i A_i)$ to keys and queries; the token remains a feature vector, the Lie rotation is positional encoding acting on it. RiemannFormer~\cite{ji2025riemannformer} imports geometric vocabulary (tangent vectors, parallel transport, Lie-algebra positional encoding) while keeping tokens as vectors in $\R^d$; the paper explicitly states it avoids the continuous geometry. All three keep the vector-token ontology --- the construction differs by actually placing the token on the manifold.

\subsection{Concurrent and recent work (2024--2026)}
Several 2024--2026 systems approach the same territory from the vector-token side and stop at the schema boundary; we group them by recency rather than tradition, as each extends a family already covered above. The Geometry-Aware Attention mechanism (GTA)~\cite{miyato2024gta} applies group representations to queries, keys, and values to align coordinate frames between multi-view tokens. The tokens remain vectors, and the group acts on them by $\rho(g)$. GTA already uses the relative pose, transforming keys and values by $\rho(g_i g_j^{-1})$, but it does so as an external representation action on a vector token, with no logarithm and no norm. The Clebsch--Gordan Transformer~\cite{cobb2025cgtransformer} builds global equivariant attention through irreducible representations and Clebsch--Gordan tensor products --- the irrep tradition in its purest transformer form, restricted to compact groups by unitarity. The Vanilla Group Equivariant Vision Transformer~\cite{fu2026gevit} renders ViT modules (patch embedding, self-attention, positional encoding, sampling) equivariant through enforced inductive biases; tokens are feature vectors throughout. None of these crosses to token-as-group-element; in each, the limit we highlight tracks the vector-token ontology they retain. Two further 2025--2026 entries sit in the same frame: GRAPE~\cite{grape_2025} encodes position by a group action applied to feature tokens, and Fran\c{c}ois \& Ravera~\cite{francois2026relational} reformulate attention through symmetry-reduced invariants on feature vectors. On the finite-group side, Group-Algebraic Tensors~\cite{hoyos2026gat} make equivariance an intrinsic algebraic property via a $\star_G$ tensor algebra (Lean-4 formalised). The same move, treating equivariance as structure rather than something to enforce, takes the Lie-group form here, by a different mechanism.

\subsection{Summary}
To our knowledge, no published work places the token strictly on a matrix Lie group and scores attention by the algebra norm of the relative pose. The distinction from every prior method, including the 2024--2026 cluster above, is not the kernel choice but the token type. Every prior approach is a vector token with an external group action. \emph{Lie-Algebra Attention} is a group-element token with no external action at all. The construction in this paper is, in that precise sense, new.

\section{Discussion and Limitations}
\label{sec:discussion}

\textbf{Chart restriction.} The construction lives on the principal-log chart $\mathcal{U} \subset G$ on which $\log$ is single-valued. Some token pairs have a relative pose $g_i^{-1} g_j$ near or on the chart boundary. Examples are a rotation by $\pi$ on $\SO(3)$, or an $\Aff(2)$ element whose linear part has a negative-real-axis eigenvalue. For these the score is undefined or numerically unstable. In \cref{sec:exp:se2} and \cref{sec:exp:so3} the data distribution naturally stayed off the boundary thanks to the constraints $|\omega_h| < \pi/8$ and $\|\omega_h\| \le \pi/8$, respectively; in \cref{sec:exp:aff2} we used a rejection sampler that discards step generators whose iterated linear part $A_h^k$ for $k \le 7$ has an eigenvalue on the non-positive real axis. For aggressive use cases that frequently cross the boundary, two options apply: chart-switching to an alternate atlas, or the global form of Theorem~\ref{thm:equivariant-output}, which uses $\Delta_i \in G$ directly without the $\exp$ parametrization at the cost of giving up the closed-form algebra-norm score. The chart issue is well-understood in the Lie-group literature and standard fixes carry over; for this work, the chart restriction is a real constraint that the practitioner should be aware of but not a structural obstacle.

\textbf{Block-isotropy assumption.} The block-weighted norm \eqref{eq:weighted-norm} reduces the $K^2$ entries of a full bilinear form on a $K$-block algebra to $K$ block weights (one per block, isotropic within), motivated by the $O(n)$-symmetry of the Frobenius form on each block. A non-isotropic kernel would have more parameters and more freedom to fit. Whether the extra capacity helps is an empirical question. Model C (full MLP score on the same invariant $w_{ij}$, with $50$ to $80\times$ more score parameters than G) does not consistently outperform G across the three sequence-completion experiments. It loses by $34\%$ on $\SE(2)$ and stays level with G within seed noise on $\SO(3)$ and $\Aff(2)$, marginally ahead on $\SO(3)$. The order-of-magnitude larger score does not help. That tests the squared-norm \emph{form}, not the block-isotropy reduction itself. The constant-step invariants are collinear (\cref{sec:exp:interpretation}), so the blocks scale together. Neither G's weights nor C's kernel can exploit block shape here. A targeted ablation --- the block-weighted form against a non-isotropic kernel on data whose invariants span the blocks --- would isolate the block structure; we leave this for follow-up work.

\textbf{Symmetric score.} The score $s_{ij} = -\|w_{ij}\|_\lambda^2 / \tau$ is symmetric in $i, j$, since on the chart $w_{ji} = -w_{ij}$ and $\|{-}w_{ij}\|_\lambda = \|w_{ij}\|_\lambda$; directional information enters only through the value pathway $V_{ij} = W_V[h_j; w_{ij}]$, where $V_{ij} \neq V_{ji}$. For the permutation-invariant set and undirected-sequence tasks of this paper the symmetric score is appropriate, but a directed or autoregressive application would need an added asymmetric score term.

\textbf{Empirical scope.} This preprint validates the construction empirically on $\SE(2)$, $\SO(3)$, and $\Aff(2)$ at the toy-task scale of sequence completion --- one experiment per block-count level of \cref{tab:instantiations}, with the math for $\SO(2)$ and $\SE(3)$ in \cref{sec:instantiations} and independently verified symbolically, and the spatial affine extension $\Aff(3)$ included similarly in \cref{sec:instantiations} as the analogue of $\Aff(2)$. The $\Aff(3)$ instantiation is given in closed form: its reach into the 3D non-compact non-abelian affine regime is mathematically established here, and what is deferred is the empirical evaluation of 3D affine-token applications, not the construction's ability to express them. Larger-scale and applied empirical extensions are deferred similarly; the release here makes the priority claim on the construction itself rather than on benchmark dominance.

\section{Conclusion}
\label{sec:conclusion}

We placed the token on the group: a token \emph{is} a bare matrix Lie group element --- a transformation, nothing attached to it and nothing acting on it from outside. \emph{Lie-Algebra Attention} is what follows from that premise. The intrinsic invariant $w_{ij} = \log(g_i^{-1}g_j)$ gives a canonical closed-form algebra-norm score, equivariance becomes a tautology, and the cocycle is automatic. The backbone stays a vanilla transformer with no equivariant machinery. We instantiated the construction in closed form for the six matrix Lie groups relevant to 2D and 3D spatial reasoning --- $\SO(2)$, $\SE(2)$, $\SO(3)$, $\SE(3)$, $\Aff(2)$, $\Aff(3)$ --- and validated it across $\SE(2)$, $\SO(3)$, and $\Aff(2)$, every block-count level of the construction: a learned MLP kernel with $50$ to $80\times$ more score parameters on the same invariant does not improve on the closed-form score, and equivariance holds at the numerical floor, five to twelve orders of magnitude tighter than a vector-token baseline. The contribution is not a new attention kernel but a different premise about what a token is; the canonical score is its readout, not a design choice. What the premise reaches that no prior attention method does is the affine full frame: $\Aff(2)$ and $\Aff(3)$, non-compact, non-abelian, with scale and shear. This lies beyond the reach of irrep methods, which have no non-trivial finite-dimensional unitary irreps, and of surjective-exp methods. We demonstrate the $\Aff(2)$ case here and give the spatial $\Aff(3)$ case in closed form for follow-up work to evaluate.

%
%
%
%

\bibliographystyle{plain}

\end{document}